\documentclass{article}
\usepackage[preprint]{neurips_2026}

\usepackage[utf8]{inputenc} 
\usepackage[T1]{fontenc}    
\usepackage{hyperref}       
\usepackage{url}            
\usepackage{booktabs}       
\usepackage{amsfonts,amsmath,amsthm}       
\usepackage{nicefrac}       
\usepackage{microtype}      
\usepackage{xcolor}         
\usepackage{graphicx,tikz}
\usepackage{float}

\newtheorem{theorem}{Theorem}[section]
\newtheorem{lemma}[theorem]{Lemma}
\newtheorem{remark}[theorem]{Remark}

\title{Feature Selective Model Collapse in Diffusion Models: Total Replacement versus Fixed-Budget Training}

\author{%
  Hanna~Malet\\
CEREMADE CNRS\\
  Université Paris Dauphine - PSL\\
  Paris 75116 FRANCE\\
 \& \\
Cornell University\\
  Ithaca, NY, USA\\
\And
  Gabriel~Turinici \\
CEREMADE CNRS\\
  Université Paris Dauphine - PSL\\
  Paris 75116 FRANCE\\
  \texttt{gabriel.turinici@dauphine.fr} \\ \url{https://turinici.com} \\
}

\begin{document}

\maketitle

\begin{abstract}
Model collapse arises when generative models are trained on synthetic data produced by earlier models. The phenomenon has attracted considerable attention because of its societal and technical implications. However, previous studies have reached seemingly contradictory conclusions: replacing real data with synthetic data causes collapse (Shumailov et al.), yet accumulating real data alongside synthetic data can prevent it. For  diffusion models, 
we study an intermediate regime typical of finite-budget pipelines: all past datasets and the real data are kept, but each new model is trained on a fixed-size sample from this growing pool, so the real fraction vanishes without any data being removed. 
Experiments on a 2D spiral dataset as well as the image benchmarks (MNIST, Fashion-MNIST, and CIFAR-10) show that replacement protocol degrades dataset rapidly as in the literature, whereas the fixed budget degrades only partially, sparing some features. A linear-response model of the multi-generational parameter dynamics, analyzed by stochastic recursion, confirms that the two protocols differ: some features will be fragile and lost within a few generations for both protocols, while some will be robust and  preserved over practically unbounded horizons under the fixed budget protocol.
\end{abstract}

\section{Introduction}
\label{sec:introduction}

Generative models are increasingly trained on data that already contains the output of earlier
 models. As synthetic content becomes ubiquitous and training pipelines rely on ever
larger datasets collected over long periods of time, understanding the long-term consequences of
such recursive training has become both a practical and a theoretical necessity with societal and technical implications. 
\citet{shumailov2024curserecursiontraininggenerated} (see also \citet{dohmatob2024tale}) showed that this feedback loop degrades
successive models until the tails of the original distribution disappear, a phenomenon they
called \textit{model collapse}. Their main protocol is one of recursive replacement: each generation
is trained on the output of the previous one, and the real dataset is gone after the first step.

Subsequent work reached conclusions that appear to contradict this one.
\citet{gerstgrasser2024modelcollapseinevitablebreaking} argued that if synthetic data
accumulate alongside the real data instead of replacing it collapse no longer occurs. \citet{seddik2024badtrainingsyntheticdata} made collapse depend on
the synthetic-to-real ratio rather than on the recursion itself, and
\citet{barzilai2026modelsdontcollapseconsistency} showed that for maximum likelihood
estimation collapse can be avoided even when the fraction of real data vanishes, while also
exhibiting cases where it cannot. \citet{pmlr25_model_collapse} approach the question from the
mitigation side, editing human-produced text rather than regenerating it. Closer to us in model
class, \citet{yoon2025chaindiffusion} and \citet{bohacek2025nepotistically} document collapse in
finetuned Stable Diffusion over short chains. 

We investigate a recursive-training protocol that is intermediate between those of
\citet{shumailov2024curserecursiontraininggenerated} and
\citet{gerstgrasser2024modelcollapseinevitablebreaking}. As in the latter, synthetic data are
accumulated rather than replacing the original dataset. However, unlike that setting, here the
training-set size is kept fixed: at each generation, a constant number of examples is
resampled from the growing data pool. This regime, which naturally arises under a finite training budget, was introduced as
\emph{accumulate-subsample} by \citet{kazdan2025collapse} and examined empirically for diffusion models by \citet{shi2025closer}; we complement these works with a per-feature theoretical analysis.

Our contributions are both empirical and theoretical. On the empirical side, we focus on the class
of diffusion models that has received comparatively less
attention in the model-collapse literature. We recursively train diffusion models over several generations, first on a toy 2D spiral distribution and then on MNIST,
Fashion-MNIST \& CIFAR-10.
 Under total replacement, we recover the rapid deterioration reported in previous
work. Under accumulation with a fixed dataset size and constant training budget, the deterioration is only partial and
affects specific features, leading to a behavior that differs both from complete collapse and
from the bounded-error regime predicted under unbounded accumulation.

On the theoretical side, we introduce a system of equations describing the multi-generational parameter dynamics of recursively trained diffusion models. Like \citet{bertrand2024stability}, we linearize the retraining map in parameter space; unlike their stability analysis, which requires a large enough fraction of real data, our model covers a real fraction vanishing as $1/(n+1)$ and a random re-initialization at each generation.  Using stochastic analysis, we establish rigorous convergence results that explain both the rapid degradation observed under total replacement and, in the fixed-budget accumulation regime, the heterogeneous evolution of reconstruction quality across different features.
We also derive per-feature halving times which shows that recursive training first eliminates 'fragile' features providing thus quantitative guidance on when real-data refreshes and related mitigation measures become necessary.

A detailed positioning relative to prior work is 
presented in Appendix~\ref{sec:related-work}. In short, the works differ less about the recursion itself than
about how the training set is built at each step; the fixed training-set size removes the
vanishing-noise-share weighting on which the protective effect of accumulation relies.

\section{Methodology}
\label{sec:methodology}

We train diffusion models with DDPM training \citep{ho2020denoising} and DDIM sampling
\citep{song2021denoising}. See appendix for architecture details. Each model is trained on a
dataset and results in a new model called a "generation". Let $\mathcal{D}$ be the real dataset (also denoted $\mathcal{D}_0$), $d$  the number of model parameters
and let $S_n \in \mathbb{R}^d$ denote the generative model parameters at generation $n$; once $S_n$ is obtained, 
a
synthetic dataset of size $n_d$, denoted
$\mathcal{D}_n$, is sampled using the $S_n$ as model parameters.

Two data protocols are used: the {\bf last generation} protocol where generation $n+1$ is
trained on $\mathcal{D}_n$ alone and the {\bf fixed budget} protocol where generation $n+1$ is
trained on a sample of fixed size $n_d$ drawn from the pool
$\mathcal{D}\cup \mathcal{D}_1\cup\cdots\cup \mathcal{D}_n$; the pool grows but the training set
does not, so the probability that a training example is real decreases with $n$ although
no data is ever discarded. This protocol can describe for instance the data accumulation on internet, each new model trains only on a part of available corpora.

All models start from a set of parameters $Z_n$ sampled at random from some distribution
$\nu$.

As {\bf datasets} we use a toy 2D spiral dataset and then MNIST, Fashion-MNIST \& CIFAR-10.

\noindent {\bf Parameters} 
We use $n_d \sim 10^4$ samples, $n_{epochs} \sim 100-1000$ epochs and $n_{gen} \sim 20-100$ generations, cf. Appedix for details. To
sample from the model we use a Exponential Model average with parameter $0.8$ that smooths model
parameters.

\noindent {\bf Metrics.} To evaluate the model collapse for image datasets we use three metrics: Fréchet Inception
Distance (FID), pixel-FID (same formula as FID but without using Inception features) and
Wasserstein-2 metric, denoted $\mathcal{W}_2$ at pixel level. For the simple 2D spiral 
dataset 
neither the Inception features nor the pixel-space Fréchet distance make sense so we use 
$\mathcal{W}_2$ computed exactly by solving the discrete optimal transport problem; in this case the Fréchet distance is computed between
the Gaussian approximations of the two clouds in $\mathbb{R}^2$.

\section{Modeling the multi-generational dynamics and Diffusion} \label{sec:appendix_model}

Diffusion models are procedures that define two mappings:  

- {\bf thse 'training' phase} $\mathcal{T}$ that 
has as input a couple (initial value $Z$, distribution $\mathcal{D}$)  and as output a parameter denoted here
$S = \mathcal{T}_{Z,\mathcal{D}}\in \mathbb{R}^d$

- and an inverse mapping called {\bf 'sampling'} $\mathcal{S}$
that associates to a parameter $S\in\mathbb{R}^d$ a distribution denoted $\mathcal{S}_S$.

Both phases involves a neural network whose parameters are $S\in\mathbb{R}^d$. 
Note that in the following we will not make a distinction between the dataset $\mathcal{D}$ and the distribution it generates. Also note that 
we fixed the architecture of the neural network and the a priori distribution from which the sampling is performed to obtain $\mathcal{S}_S$. We will suppose in the following  that
\begin{align}
&\textit{Diffusion model hypothesis :} 
\text{ for any dataset of interest }
    \mathcal{D} \text{ there exists an unique (but unknown)}
\nonumber \\   & \text{parameter }
    S \in \mathbb{R}^d
    \text{ such that } \mathcal{D} = \mathcal{S}_S.
\label{hyp:dataset_representation}
\end{align}
Of course, the hypothesis \ref{hyp:dataset_representation} is too strong to be true in general but if the neural network capacity is large enough one can train a diffusion model to represent the dataset 
$\mathcal{D}$ with enough accuracy so we expect the error in \eqref{hyp:dataset_representation} to be negligible.

\subsection{The simplest Diffusion training equations}

To model the diffusion training we need to know how the neural network parameter $S$ depends, through the training, on the training (target) dataset 
$\mathcal{D}$ and on the initialization $Z$. 
Note that because of the hypothesis \eqref{hyp:dataset_representation} we can restrict ourselves to mappings involving only elements in $\mathbb{R}^d$ by replacing $\mathcal{D}$ by its representative $T$ such that $\mathcal{S}_T= \mathcal{D}$. 

Our goal is not to describe the full nonlinear dependence of $S_{train} = \mathcal{T}_{Z,\mathcal{D}=\mathcal{S}_T}$ on $(Z,T)$, but rather
to derive a crude first-order approximation in the spirit of Neural Tangent Kernel (NTK) \cite{jacot2018NTK}
analyses. In particular, we seek a model that captures the leading-order effect of the
initialization while aggregating all higher-order phenomena into a residual term.

Observe first that when the initialization $Z$ coincides with the target parameter $T$,
the procedure is already at equilibrium and therefore if 
$Z=T$ then $S_{train} \sim T$ with possible small oscillations due to noise terms. Hence, the map
\begin{align}
\Phi : Z \mapsto S_{train}
\end{align}
admits an almost 'fixed point' at $T$ in the sense that 
$\Phi(T) = T+\varepsilon$ where 
$\varepsilon$ is some noise taken here with zero average and independent of $T$.
 Assuming that $\Phi$ is sufficiently regular,
a first-order Taylor expansion around $y$ yields
\begin{align}
S_{train}-T
=
\Phi(Z)-\Phi(T)
\approx
J_T (Z-T),
\end{align}
where $J_T$ denotes the Jacobian of $\Phi$ evaluated at $T$.

The diffusion dynamics are stochastic and only approximately captured by their
linearization. Moreover, higher-order nonlinear effects are neglected in the above
expansion. We therefore introduce an additive fluctuation term and write
\begin{equation}
S_{train}-T = M(Z-T) + \varepsilon ,
\ \varepsilon \sim \mu, \ \mathbb{E}[\varepsilon]=0,
\ Var[\varepsilon]< \infty,
\label{eq:first_order_model}
\end{equation}
where $\varepsilon$ is a centered random variable accounting for both the intrinsic
diffusion noise and the approximation error induced by linearization. We assume it follows some centered distribution $\mu$ with finite second order moment.

The matrix $M$ should be interpreted as a local sensitivity operator describing how
a perturbation of the initialization propagates to the final parameter. Under a local
quadratic approximation of the discrepancy functional around $T$, the dominant
second-order information is encoded by a Hessian-like operator. This motivates
restricting $M$ to be symmetric positive definite. 
Such a choice is consistent with locally stable gradient-driven dynamics and excludes
purely rotational components that are not expected to appear at first order. Moreover, since the training is supposed to improve the initial quality we can assume that $M$ is somehow a contraction which means that its spectrum is in $]0,1[$. 
Consequently, among simple first-order models, the linear-response ansatz:
\begin{align}
S_{train}-T = M(Z-T) + \varepsilon ,
\  M=M^T, \  spectrum(M) \in ]0,1[,
\ \varepsilon \sim \mu, \ \mathbb{E}[\varepsilon]=0,
\ Var[\varepsilon]< \infty,
\end{align}
appears as a natural approximation. The deterministic component captures the
leading-order dependence on the initialization, while the noise term absorbs
higher-order nonlinear effects and the stochasticity intrinsic to the diffusion
procedure.

\subsection{The simplest model for dataset evolution:  Linearity with Respect to Data Mixtures}

The last ingredient of out model is a way to describe how the dataset evolves from one generation to the other and more precisely how its representative in $\mathbb{R}^d$ evolves. 

For the ''last generation'' protocol (cf. Section~\ref{sec:methodology}), the dependence is easy to set: if the previous generation produces parameter $S$ then the dataset will correspond to $S$ (at first order). For the ''fixed budget'' protocol we need to explain how the new dataset depends on the old one and on the newly generated data. 

We next consider a target distribution obtained as a mixture of two datasets i.e. a dataset taking $\lambda$ percentage of data from dataset $\mathcal{S}_u$ and the rest from $\mathcal{S}_v$. As distributions this mixture is formally denoted: 
$
\lambda \mathcal{S}_u + (1-\lambda)\mathcal{S}_v,
\ \lambda\in[0,1]$.
Within the same first-order approximation framework, we assume that the map
$S \mapsto \mathcal{S}_S$ is locally linear around the parameters of interest.
Under this assumption, convex combinations of distributions are approximately
represented by convex combinations in parameter space. 
Thus, in the same spirit as NTK-type linearizations, convex interpolation in data space
induces, to first order, a convex interpolation in parameter space.
and consequently, the parameter associated with the mixed dataset is approximated by
$\lambda u +(1-\lambda)v$. More precisely, up to some negligible error, we will assume:
\begin{align}
\lambda \mathcal{S}_u + (1-\lambda)\mathcal{S}_v
=
\mathcal{S}_{\lambda u +(1-\lambda)v}.
\end{align}

\section{Theoretical results : convergence and large perturbation analysis} \label{appendix:theory}

We analyze here the convergence dynamics of the multi-generation diffusion models under the two data protocols. Denote $S_n \in \mathbb{R}^d$ the parameters of the  diffusion neural networks at generation $n$, $T_n \in \mathbb{R}^d$ the parameters of the target distribution at generation $n$, $Z_n$ the initialization of the generation $n+1$ and $\varepsilon_n$ the noise at generation $n+1$ training. Under the
assumptions in Section~\ref{sec:appendix_model} we obtain the following recursion:
\begin{align}
&\text{Training equation (both protocols) : } S_{n+1}-T_n = M(Z_n-T_n) + \varepsilon_n
\label{eq:training_equation}
\\
& \varepsilon_n \sim \mu \text{ i.i.d., } \ \ \forall n :  
\mathbb{E}[\varepsilon_n]=0, \ Var[\varepsilon_n] < \infty
\nonumber \\
& \text{ Fixed budget protocol : }
(n+1)T_{n+1} = T_n n + S_{n+1}
\label{eq:fixed_budget}
\\
&\text{ Last generation protocol : }
T_{n+1} = S_{n+1}.
\label{eq:last_generation}
\end{align}
Note that after a change of basis we can assume matrix $M$ diagonal, all diagonal elements are in 
$]0,1[$;  we will refer to this basis as a 'feature basis'. We state now the main convergence results, one for each protocol.
To this end we need to choose a regime for the initialization of the  diffusion training. We will therefore suppose that the initialization is  non-biased, has finite variance and all generations are independently initialized i.e.,
\begin{align}
Z_n \sim \nu \text{ i.i.d., } \ \ \forall n :  
\mathbb{E}[Z_n]=0, \ Var[Z_n] < \infty.
\label{eq:initialization_hyp}
\end{align}

\begin{theorem}
Under the last generation protocol
 \eqref{eq:initialization_hyp},
 \eqref{eq:training_equation}, \eqref{eq:last_generation}
both the target parameter $T_n$ and the model parameters $S_n$  converge to some 
$T_\infty$ with 
(stationary) distribution $\pi_{T_\infty}^{LG}$ 
with zero average but non-null limit variance.
Moreover, for a feature 
$e_\lambda$ (eigenvector of $M$)
corresponding 
to eigenvalue $\lambda \in ]0,1[$ the 
halving time of the average $m_n = \mathbb{E}[\langle T_n, e_\lambda \rangle]$
is
\begin{align}
h_{1/2}^{last \ gen} =
\left\lceil \frac{\log 2}{- \log|1-\lambda|} \right\rceil.
\label{eq:halving_last_gen}
\end{align} 
Here, $\lceil x \rceil$ denotes the smallest integer greater than or equal to $x$ and halving time is the minimal integer $n$ such that $m_{n} \le \frac{1}{2} m_0$.
\label{thm:last_gen_cv_halving}
\end{theorem}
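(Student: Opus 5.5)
Combining \eqref{eq:training_equation} with \eqref{eq:last_generation} gives the linear recursion
\begin{align*}
T_{n+1} = S_{n+1} = (I-M)T_n + M Z_n + \varepsilon_n ,
\end{align*}
so $(T_n)$ is a vector autoregressive (AR(1)) process driven by the i.i.d.\ innovations $\xi_n := M Z_n + \varepsilon_n$. I assume that $Z_n$ and $\varepsilon_n$ are independent of each other and of $T_0,\dots,T_n$; the paper does not state this explicitly. Under \eqref{eq:initialization_hyp} and the hypotheses on $\mu$, the $\xi_n$ then have zero mean and finite covariance $\Sigma = M\,\mathrm{Var}[Z]\,M + \mathrm{Var}[\varepsilon]$.

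I would work in the feature basis where $M=\mathrm{diag}(\lambda_1,\dots,\lambda_d)$ with $\lambda_i\in]0,1[$, so that $A:=I-M$ is diagonal with entries $1-\lambda_i\in]0,1[$. Hence $\|A\|<1$ and $A$ is a strict contraction. Unrolling the recursion gives
\begin{align*}
T_n = A^n T_0 + \sum_{k=0}^{n-1} A^{n-1-k}\xi_k .
\end{align*}
This law is the same as that of $A^nT_0+\sum_{j=0}^{n-1}A^j\xi_j$, by reversing the order of the i.i.d.\ sequence.

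The plan for convergence is as follows. The series $T_\infty := \sum_{j\ge 0}A^j\xi_j$ converges in $L^2$, and also almost surely, because $\sum_j \|A^j\|^2<\infty$ and the $\xi_j$ are independent with bounded second moments. The term $A^nT_0$ tends to $0$ deterministically. Together these show that $T_n$ converges in distribution (in Wasserstein-2) to the law $\pi^{LG}_{T_\infty}$ of $T_\infty$. An equivalent route is the standard contraction argument for iterated random affine maps: coupling two copies started at different $T_0$ with the same noise gives $\|T_n-T_n'\|\le \|A\|^n\|T_0-T_0'\|$.

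The limit mean is $\sum_j A^j\mathbb{E}\xi_j=0$. The limit covariance $V$ solves the discrete Lyapunov equation $V = AVA + \Sigma$, so in the feature basis $V_{ii} = \Sigma_{ii}/(1-(1-\lambda_i)^2)$. This is non-null whenever $\Sigma\neq0$, i.e.\ unless both $Z$ and $\varepsilon$ are degenerate; the statement's non-null claim needs this mild non-degeneracy, which I would state as a remark. Since $S_n=T_n$, the same limit holds for $S_n$.

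For the halving time, I would take expectations along $e_\lambda$. This gives $m_{n+1}=(1-\lambda)m_n$, hence $m_n=(1-\lambda)^n m_0$. Assuming $m_0>0$, or working with $|m_n|$, the condition $m_n\le m_0/2$ becomes $(1-\lambda)^n\le 1/2$, i.e.\ $n\ge \log 2/(-\log|1-\lambda|)$. The minimal integer satisfying this is the ceiling in \eqref{eq:halving_last_gen}.

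The main subtlety is the sense of convergence. $T_n$ cannot converge almost surely to a fixed random variable, since fresh noise enters every step; the claim must be read as convergence in law to the stationary distribution, which the backward-series representation handles cleanly. The remaining points (independence assumptions, $m_0>0$, non-degenerate noise) I would make explicit.
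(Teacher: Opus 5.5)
Your proposal is correct and follows the paper's proof. The paper makes the same substitution to get the VAR(1) recursion $T_{n+1}=(I-M)T_n+MZ_n+\varepsilon_n$ and then cites its stable-recursion lemma (Lemma~\ref{lemma:var1_convergence}), whose content your backward-series argument reproves; it obtains the halving time from the same mean recursion $m_n=(1-\lambda)^n m_0$. The hypotheses you make explicit are left implicit in the paper but are genuinely needed for the statement as written: the pairs $(Z_n,\varepsilon_n)$ must be i.i.d.\ and independent of the past, $m_0>0$ is required for the halving-time formula, and the noise must be non-degenerate for the limit variance to be non-null.
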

\begin{proof}
See appendix \ref{appendix:proof_main_results}.
\end{proof}

\begin{theorem}
Under the fixed budget protocol
 \eqref{eq:initialization_hyp},
 \eqref{eq:training_equation}, \eqref{eq:fixed_budget}
the target parameter $T_n$ converges to zero pointwise. Moreover, for a feature 
$e_\lambda$ (eigenvector of $M$)
corresponding 
to eigenvalue $\lambda \in ]0,1[$ the 
halving time of the average $m_n = \mathbb{E}[\langle T_n, e_\lambda \rangle]$
is
\begin{align}
h_{1/2}^{fixed \ budget}  = e^{-\gamma} 2^{1/\lambda} 
(1+ O(\lambda)),
\label{eq:halving_fixed_budget}
\end{align} 
with $\gamma$ the Euler constant.
\label{thm:fixed_budget_cv_halving}
\end{theorem}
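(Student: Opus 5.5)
The plan is to project the recursion onto the feature basis, where it decouples into independent scalar recursions, and to solve those explicitly. Substituting the training equation \eqref{eq:training_equation} into the fixed budget rule \eqref{eq:fixed_budget} gives
\begin{align}
T_{n+1} = T_n + \frac{S_{n+1}-T_n}{n+1} = \Bigl(I - \frac{M}{n+1}\Bigr) T_n + \frac{M Z_n + \varepsilon_n}{n+1}.
\end{align}
Since $M$ is diagonal in the feature basis, the coordinate $x_n = \langle T_n, e_\lambda\rangle$ satisfies
\begin{align}
x_{n+1} = \Bigl(1-\frac{\lambda}{n+1}\Bigr)x_n + \frac{\xi_n}{n+1}, \qquad \xi_n = \lambda\langle Z_n,e_\lambda\rangle + \langle \varepsilon_n, e_\lambda\rangle .
\end{align}
By \eqref{eq:initialization_hyp} and the noise assumptions, the $\xi_n$ are i.i.d. with zero mean and finite variance $\sigma^2$. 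They are also independent of $x_n$, because $Z_n$ and $\varepsilon_n$ are drawn fresh at generation $n+1$.

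\textbf{Convergence to zero.} This is a Robbins--Monro recursion with steps $a_n = 1/(n+1)$. These satisfy $\sum a_n = \infty$ and $\sum a_n^2<\infty$, the drift $-\lambda x$ has a unique stable zero at $0$, and the noise is a martingale difference with bounded variance. I would therefore invoke the standard stochastic approximation theorem, or the Robbins--Siegmund lemma applied to $x_n^2$, to get $x_n\to 0$ almost surely in every feature. This gives $T_n\to 0$ pointwise.

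As a cross-check, and to get $L^2$ convergence, I would compute the variance directly. It satisfies
\begin{align}
v_{n+1} = \Bigl(1-\frac{\lambda}{n+1}\Bigr)^2 v_n + \frac{\sigma^2}{(n+1)^2}.
\end{align}
A comparison argument then shows $v_n = O(n^{-\min(2\lambda,1)}\log n)\to 0$.

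\textbf{Halving time.} Taking expectations eliminates the noise, so
\begin{align}
m_n = m_0\prod_{k=1}^{n}\Bigl(1-\frac{\lambda}{k}\Bigr) = m_0\,\frac{\Gamma(n+1-\lambda)}{\Gamma(1-\lambda)\,\Gamma(n+1)} .
\end{align}
By the standard Gamma ratio asymptotics, $\Gamma(n+1-\lambda)/\Gamma(n+1) = n^{-\lambda}(1+O(1/n))$. Setting $m_n/m_0 = 1/2$ then gives
\begin{align}
n^{\lambda} = \frac{2}{\Gamma(1-\lambda)}(1+O(1/n)), \qquad \text{so} \qquad n = 2^{1/\lambda}\,\Gamma(1-\lambda)^{-1/\lambda}\,(1+O(1/n))^{1/\lambda}.
\end{align}
Next, $\log\Gamma(1-\lambda) = \gamma\lambda + O(\lambda^2)$, which yields $\Gamma(1-\lambda)^{-1/\lambda} = e^{-\gamma}(1+O(\lambda))$. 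This is the claimed formula \eqref{eq:halving_fixed_budget}. The product is strictly decreasing in $n$, so the halving time is well defined as the first crossing.

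\textbf{Main obstacle.} The delicate step is controlling the error terms uniformly in the relevant regime. The correction $(1+O(1/n))^{1/\lambda}$ is harmless only because the crossing point satisfies $n\sim 2^{1/\lambda}$, so $1/(\lambda n)$ is exponentially small in $1/\lambda$. Taking the ceiling adds only a relative error $O(1/n)$. I would make this rigorous with explicit two-sided bounds on the Gamma ratio, such as Gautschi/Wendel inequalities: $(n+1)^{-\lambda}\le \Gamma(n+1-\lambda)/\Gamma(n+1)\le n^{-\lambda}$. Bracketing the crossing point between these two bounds shows that both ends are $e^{-\gamma}2^{1/\lambda}(1+O(\lambda))$.
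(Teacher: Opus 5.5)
Your proposal is correct and follows the paper's proof. Both substitute the training equation into the fixed-budget rule to get a Robbins--Monro recursion with step $1/(n+1)$ and centered i.i.d.\ noise $(MZ_n+\varepsilon_n)/(n+1)$, obtain almost sure convergence to $0$ by stochastic approximation (the paper uses Borkar's theorem in vector form, you argue coordinatewise with Robbins--Siegmund), and take expectations to get $m_n/m_0=\Gamma(n+1-\lambda)/(\Gamma(n+1)\Gamma(1-\lambda))$. Your expansion $\log\Gamma(1-\lambda)=\gamma\lambda+O(\lambda^2)$ and the Gautschi bracketing $(n+1)^{-\lambda}\le\Gamma(n+1-\lambda)/\Gamma(n+1)\le n^{-\lambda}$, which places the halving time within $1$ of $(2/\Gamma(1-\lambda))^{1/\lambda}$, supply rigorously the step the paper only asserts when it passes from $n^{-\lambda}/\Gamma(1-\lambda)$ to $e^{-\gamma}2^{1/\lambda}(1+O(\lambda))$.
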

\begin{proof}
See appendix \ref{appendix:proof_main_results}.
\end{proof}
\begin{remark}
Both Theorem 
\ref{thm:fixed_budget_cv_halving} and
\ref{thm:last_gen_cv_halving} still hold true 
with straightforward modification if we allow the possibility for some centered error into the dataset representation, i.e. for fixed budget protocol
$(n+1)T_{n+1} = T_n n + S_{n+1} + \zeta_n$ 
or
$T_{n+1} = S_{n+1} + \zeta_n$  for the last generation protocol.
\end{remark}

\begin{remark}
As a further intuition, note that large eigenvalues in the contraction matrix $M$ correspond to features that are lost fast because along eigenvector (feature) $e_\lambda$ with eigenvalue $\lambda \in ]0,1[$ the initial data decay involves a factor $1-\lambda$; larger $\lambda$ means faster decay and conversely, small $\lambda$ correspond to directions that are more resilient. So $\lambda\sim 0$  correspond to "robust" features while $\lambda\sim 1$ to "fragile" feature, consistent with analysis above.    
\end{remark}

\section{Interpretation of the theoretical results} \label{sec:theory}

\noindent {\bf The last generation protocol : }
In this case there is no convergence of the model parameters over the generations, there is just a limit distribution  $\pi_{T_\infty}^{LG}$ different from the original dataset distribution $\mathcal{D}$ or the initialization $\nu$. The convergence is in distribution and not point-wise which means that model parameters will take some ergodic walk following $\pi_{T_\infty}^{LG}$. The variance is not vanishing to zero but tends to some positive constant. Note that the average goes to zero exponentially fast if $Z_n$ are centered. Let us discuss this more in detail: equation~\eqref{eq:halving_last_gen} gives an estimate of the number of generations required  for some feature in the dataset (more precisely its average) to decay by a factor of $2$. This depends 
on the  eigenvalue associated to the feature.
For instance, when this eigenvalue is close to $1$, say $\lambda=0.99$ the number of generations required is  
$h_{1/2}^{last\ gen} =\frac{\log 2}{- \log|1-0.99|} = 0.15\approx 1$; when the eigenvalue is close to $0$, say $\lambda=0.01$ the number of generations is  
$h_{1/2}^{last\ gen} =\frac{\log 2}{- \log|1-0.01|} = 68.97\approx 69$.
So even for small eigenvalues the memory is erased after a limited number of generations (about 70).

\noindent {\bf The fixed budget case :} Here the situation is quite different as all model parameter trajectories converge to zero, same as the variance ; in particular the average also goes to zero but the memory of the initial data behaves as follows: 
when $\lambda=0.99$ 
we obtain
$h_{1/2}^{fixed\ budget}  \sim e^{-\gamma} 2^{1/0.99} \sim 1.13 \approx 1$~; in sharp contrast with previous case,  
when $\lambda=0.01$ 
we obtain
$h_{1/2}^{fixed\ budget}  \sim e^{-\gamma} 2^{1/0.01} \sim 7.12 \times 10^{29}$.
So for small eigenvalues the number of generations required to erase the memory of the initial feature in the dataset is extremely high, in practice  never lost. The dataset characteristics are therefore partitioned into a set of {\bf fragile features} ($e_\lambda$ with $\lambda\sim 1$) that are lost early and {\bf robust features} 
($e_\lambda$ with $\lambda\sim 0$)
that remain for long periods (some of them practically forever).

To summarize, while both protocols rapidly forget high-eigenvalue (fragile) features, they differ sharply for low-eigenvalue features: the last-generation protocol erases memory of a feature within a finite number of generations, whereas the fixed-budget protocol preserves some robust features for extremely long times, leading to a practical persistence of part of the initial dataset information.


\section{Empirical results}
\label{sec:results}

The results can be reproduced using the github available at \url{https://github.com/gabriel-turinici/model_collapse_features}.

We plot here the graphical qualitative results 
depicting the comparison between the initial dataset and the final dataset (scatter plot for 2D and sample images for the others) for both training protocols 
and refer to Appendix~\ref{app:fid-figures} for the quantitative metrics results. 

For the {\bf "last generation" protocol}
the dataset comparison in Figure \ref{fig:dataset_comparison_all}
and metrics evolution in Figure \ref{fig:metrics_last_gen}
show that deterioration is very fast, in agreement with the literature and theoretical results in Section~\ref{sec:theory}.

\begin{figure}[htbp!] \centering \renewcommand{\arraystretch}{1.2} \begin{tabular}{c| c |c |c} 
 & {Reference dataset} & {Last generation protocol} & {Fixed-budget protocol} \\
\hline 
\rotatebox{90}{Spiral} & \includegraphics[width=.12\textwidth,trim={20 20 0 0},clip] {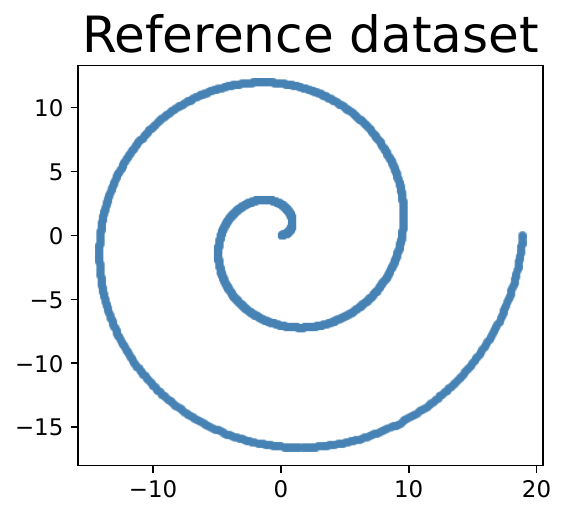} & \includegraphics[width=.10\textwidth,trim={300 20 0 0},clip] {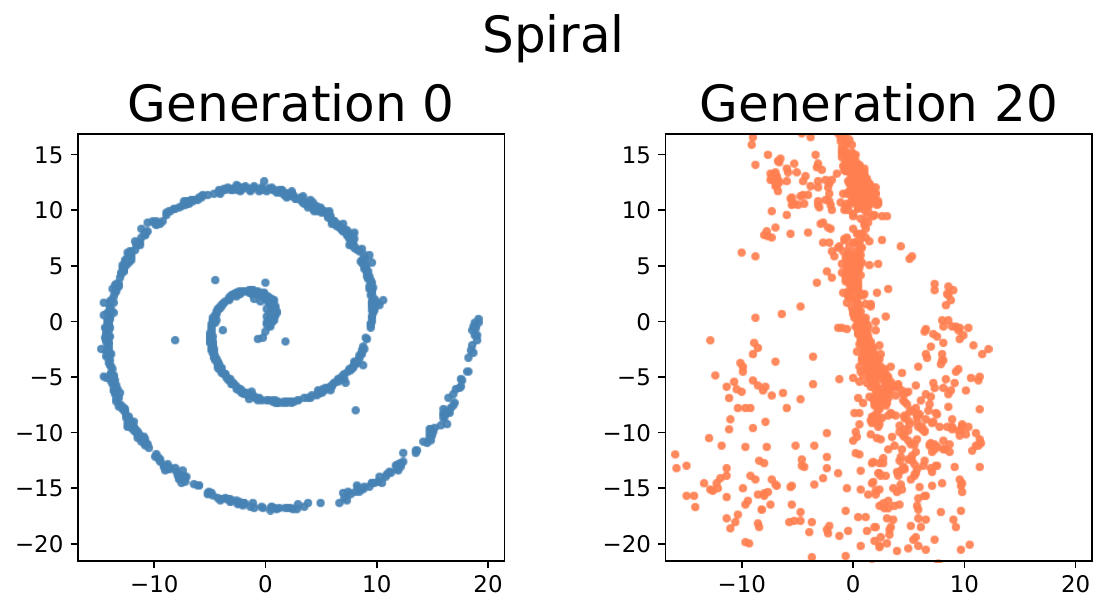} & \includegraphics[width=.10\textwidth,trim={300 20 0 0},clip] {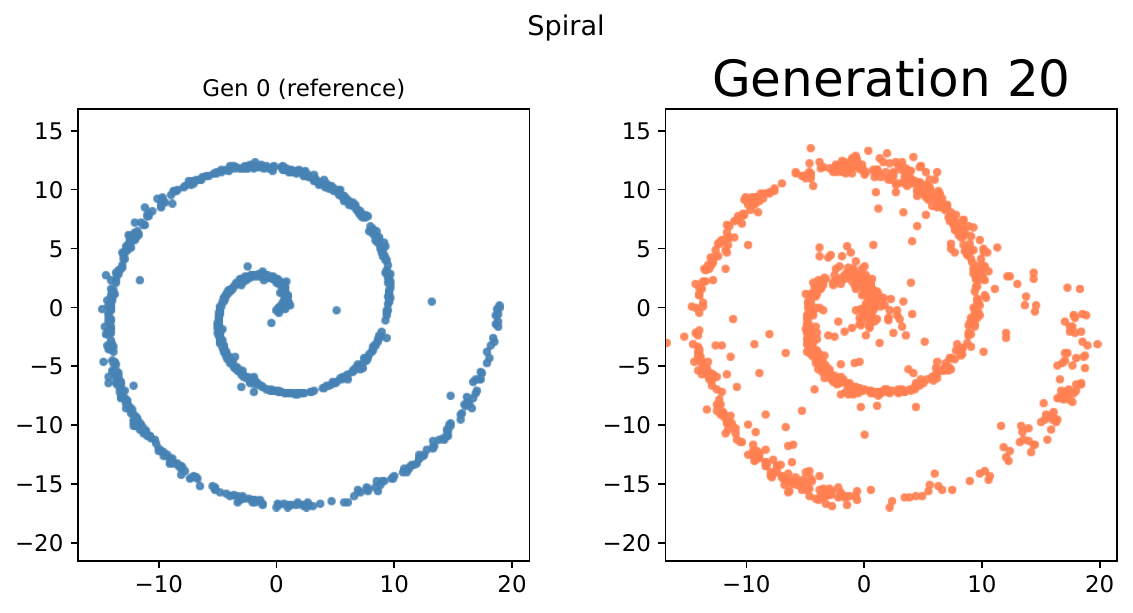} \vspace{-.1cm}\\
%
\rotatebox{90}{MNIST} & \includegraphics[width=.25\textwidth,trim={0 0 1415 0},clip] {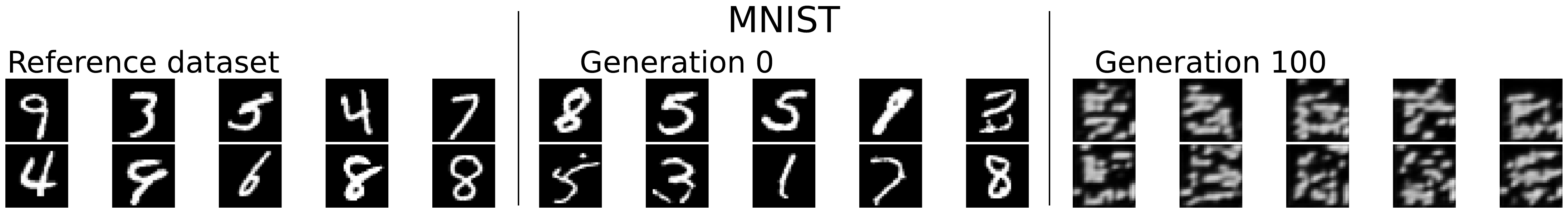} & \includegraphics[width=.25\textwidth,trim={1416 0 0 0},clip] {MNIST_comparison_dataset_last_gen_v2.pdf} & \includegraphics[width=.25\textwidth,trim={1416 0 0 0},clip] {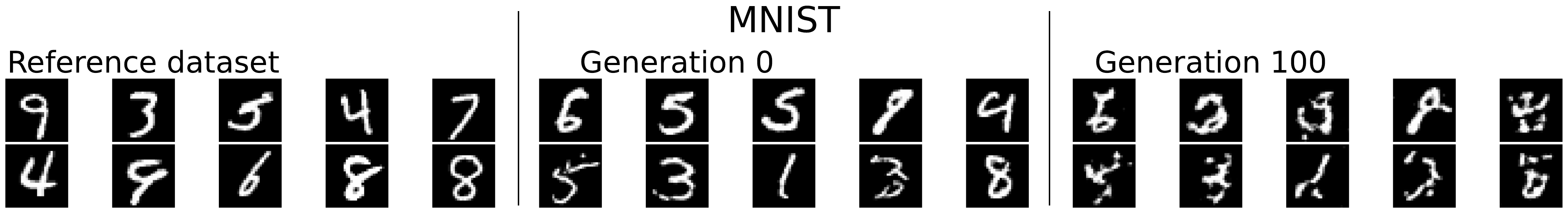} \vspace{-.1cm} \\
\rotatebox{90}{FMNIST} & \includegraphics[width=.25\textwidth,trim={0 0 1415 0},clip] {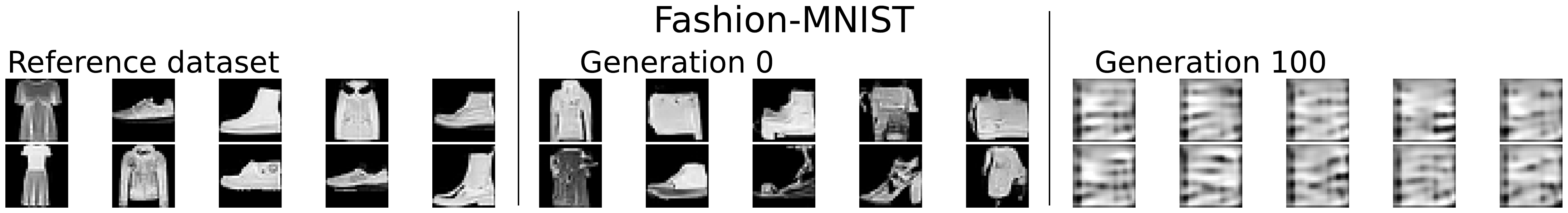} & \includegraphics[width=.25\textwidth,trim={1416 0 0 0},clip] {Fashion-MNIST_comparison_dataset_last_gen_v2.pdf} & \includegraphics[width=.25\textwidth,trim={1416 0 0 0},clip] {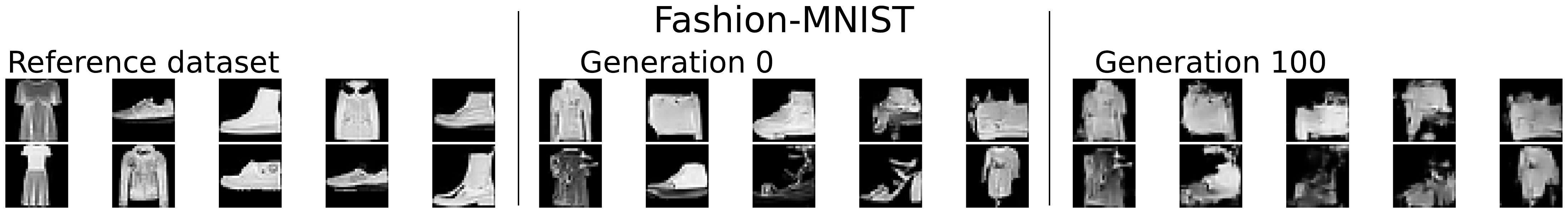} \vspace{-.1cm}\\
\rotatebox{90}{CIFAR-10} & \includegraphics[width=.25\textwidth,trim={0 0 1257 0},clip] {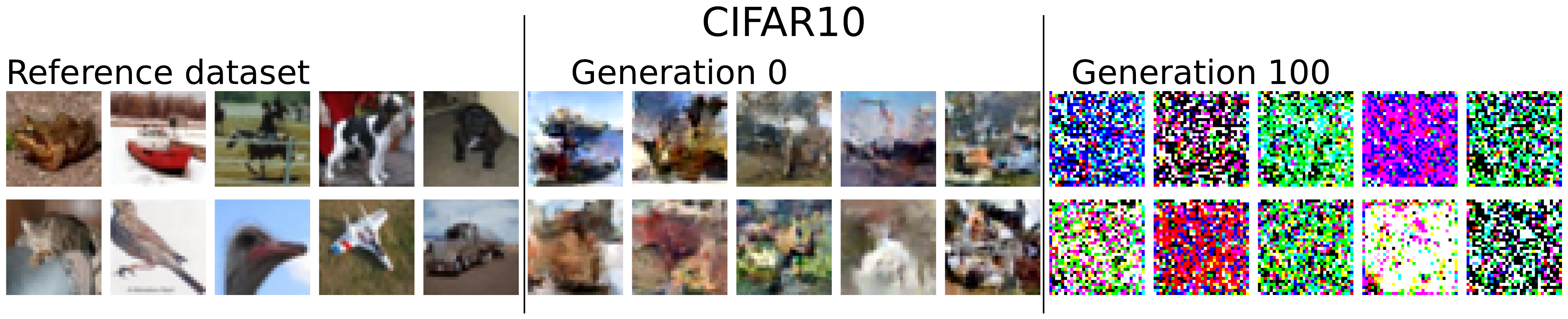} & \includegraphics[width=.25\textwidth,trim={1257.5 0 0 0},clip] {CIFAR10_comparison_dataset_last_gen_v2.pdf} & \includegraphics[width=.25\textwidth,trim={1257.5 0 0 0},clip] {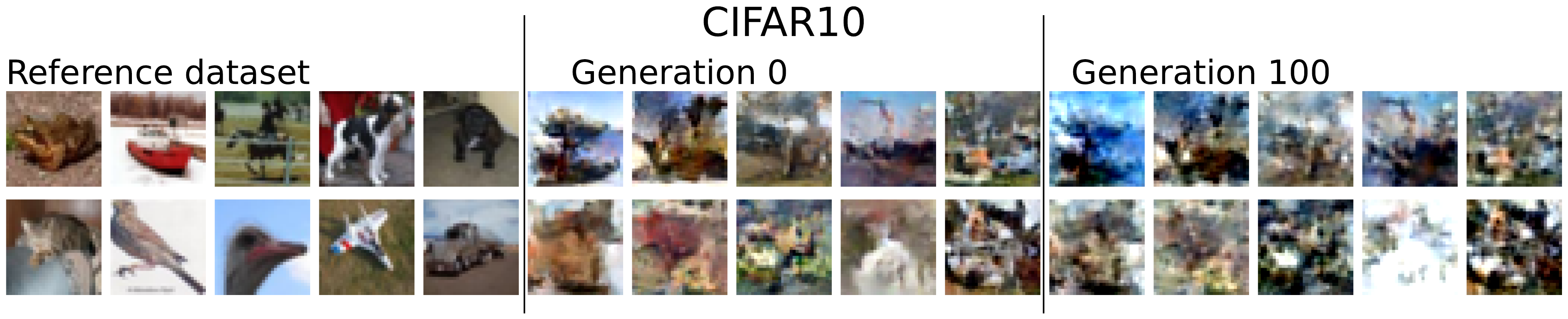} \end{tabular} \caption{ For each dataset we plot the initial data (left column of plots), the samples at the end of the ``last generation'' protocol (center column) and at the end of the ``fixed budget'' protocol (right column). For the ``last generation'' protocol all features are lost leading to an almost white noise random generation, while for the ``fixed budget'' protocol some features are lost but some are kept like the structure of digits (MNIST), clothes (FMNIST) or texture of scenes (CIFAR-10).} \label{fig:dataset_comparison_all} \end{figure}

For the {\bf "fixed budget" protocol}
the dataset comparison in Figure \ref{fig:dataset_comparison_all}
and the metrics evolution in Figure~\ref{fig:metrics_fixed_budget}
show that deterioration is partial and affects only some features as predicted by the theoretical results in Section~\ref{sec:theory}.

The metric evolution is also very discriminative, with last-generation protocol reaching much higher values of FID, that is departing farther from the original dataset, cf. figures in Appendix~\ref{app:fid-figures}.

\section{Discussion and limitations}

\paragraph{What the analysis contributes as principle.}
While the two protocols are usually compared through the \emph{speed} of degradation, 
we place the difference at the per-feature dynamics and obtain theoretical results concerning the feature decay \eqref{eq:halving_last_gen}--\eqref{eq:halving_fixed_budget} which are coherent with the experiments in
Section~\ref{sec:results}. The apparently
divergent conclusions of
\citet{shumailov2024curserecursiontraininggenerated},
\citet{gerstgrasser2024modelcollapseinevitablebreaking},
\citet{seddik2024badtrainingsyntheticdata} and
\citet{barzilai2026modelsdontcollapseconsistency} can now be reconciled for the diffusion model settings.

\noindent {\bf Why these regimes matter in practice.} 
The fragile/robust partition identified theoretically explains which features are at risk. Fragile features are those associated in earlier work with the tails and low-density regions of the distribution (rare and under-represented cases). For anyone training on web-scale corpora that already contain generated content, this turns a diffuse concern into a concrete encouragement to enforce provenance tracking, synthetic-content disclosure, or refreshing the data with real examples. In general 
we hope these insights can help asses and mitigate the effects of model collapse by strengthening preferentially the 'fragile' features. 
On the other hand, the existence of robust features helps explain why collapse in practice is often partial rather than total, potentially even creating a false sense of reassurance that the procedure remains healthy.

\noindent {\bf Limitations.} Our theory is a first-order, NTK-style linear response resting on assumptions that real
recursive training meets only approximately, and the identification of the eigenvectors of $M$
with semantic attributes is an interpretation rather than a consequence of the model.
Empirically, we retrain small models from scratch with unconditional DDIM sampling, without
sweeping the budget size or curating the pool, and our metrics measure distributional distance
only. See Appendix~\ref{app:limitations} for the additional discussion on the limitations.

\section{Conclusion}

We studied recursive diffusion models training in 
two regimes: total replacement ("last generation" protocol) and a "fixed budget" protocol where synthetic data accumulate but the training budget and dataset size stay fixed. This latter regime, introduced as accumulate-subsample by \citet{kazdan2025collapse}, is relevant for finite-budget pipelines. Empirically, on a 2D spiral and on image benchmarks (MNIST, Fashion-MNIST \& CIFAR-10) degradation under a fixed budget is partial and feature-selective, unlike the general rapid collapse of pure replacement. Theoretically, we gave a linear-response framework for this setting, complementary to the stability analysis of \citet{bertrand2024stability}, and show rigorously that the two protocols differ in kind, not degree, and the dataset characteristics split into fragile features, forgotten within a few generations, and robust features that a fixed budget preserves over practically unbounded horizons.  For the principled study of generative modeling, this reconciles previously discordant results; for the trustworthy and sustainable use of these models, it provides a way to anticipate which content, especially the tails and under-represented, fragile,  modes, is lost under recursive training on accumulating data, and how many generations that takes. Identifying features semantically, scaling to large pretrained models and conditional sampling, modeling curation of the pool, and relating halving times to explicit fairness criteria are the natural next steps.


{

\bibliographystyle{plainnat}
\bibliography{refs}
}

\appendix



\section{Related literature and positioning}
\label{sec:related-work}

We recall below some of the works that documented model collapse. The works disagree less about
the recursion itself than about how the training set is built at each step.

\citet{shumailov2024curserecursiontraininggenerated} test on VAE, GMM, and language models and
refit each generation on a finite sample drawn from its predecessor, with no real data retained.
We will call this protocol the "last generation" protocol. They distinguish an early phase, in
which the low-probability regions of the distribution are lost, from a late phase, in which
modes merge and the generated distribution concentrates on a point estimate, and they show
that the finite sample size alone suffices to produce this, independently of any limitation in
the expressivity of the model. We retain the finite per-generation sample, which is the
mechanism they identify, but not the deletion of the real data.

The three works of \citet{dohmatob2024model,dohmatob2024tale,dohmatob2025strong}
analyze model collapse through tractable regression and generative models.
They show that recursive training on synthetic data changes the relevant
scaling laws, primarily because rare events and distributional tails are
progressively lost, leading to bias, un-learning, and eventual collapse.
While the first two works focus mainly on recursive self-training and also
consider mixtures of real and synthetic data, \citet{dohmatob2025strong}
specifically studies a growing data set with a fixed nonzero synthetic
proportion. It shows that performance may fail to improve, or may even
collapse, as the total data size grows, since the synthetic fraction does not
vanish. The first two works are therefore closest to our ``last generation''
protocol, whereas the latter provides a comparison with growing-data
protocols.

\citet{gerstgrasser2024modelcollapseinevitablebreaking} discuss LLMs and generative language
models in general and let the training set grow: generation $n$ is fit on the union of the real
dataset and all previous synthetic generations. The protection they establish comes from this
growth, since a generation that contributes a vanishing share of the training set also
contributes a vanishing share of the noise. It is arguably not a consequence of the presence of the real
data as such. We depart from it on one point: instead of training on the accumulated pool we only train on a fixed size
sample from the generated pool. Note in particular that the fixed size removes the weighting
that makes their argument work.

\citet{kazdan2025collapse} introduce exactly our fixed budget protocol under the name
\emph{accumulate-subsample}, with the same finite-compute motivation, and report across
five generative settings a test loss lying between those of replacement and accumulation
and typically reaching a plateau. Our experiments are consistent with this picture. What
we add is a per-feature account of it: the plateau reflects robust features, whose halving
time grows as $2^{1/\lambda}$, while fragile features are lost as fast as under replacement.

\citet{seddik2024badtrainingsyntheticdata} focus on LLM (autoregressive language models) and
predict a threshold phenomenon, with a maximal proportion of synthetic data below which collapse
is avoided, and show in the purely synthetic regime that the learned distribution converges to a
point mass supported on an outcome that was likely under the original distribution. Because our
protocol drives the real proportion to zero, their framework is the one that most directly
predicts what we observe, and their tails-first ordering is the discrete counterpart of the
feature-selective degradation reported in Section~\ref{sec:results}.

\citet{barzilai2026modelsdontcollapseconsistency} work with generic generative models under
maximum likelihood estimation in an accumulating setting and avoid collapse even as the real
fraction vanishes, but under two conditions that our setting does not meet: the number of
samples drawn per generation must grow with the number of generations, whereas ours is constant,
and the family of distributions must be smooth enough, failing which they construct consistent
families that collapse arbitrarily fast. DDIM sampling has no maximum likelihood property to
begin with, so their positive result does not transfer, while their negative results show that accumulation alone is not a guarantee.

\citet{bertrand2024stability} study likelihood-based generative models, diffusion models
among them, retrained on a mixture of their own samples and a fixed proportion of the real
data. They linearize the retraining map in parameter space around the optimum and prove
stability when the initial model is accurate enough and the real proportion stays above a
threshold. Our framework rests on the same kind of first-order linearization but differs in
three respects: under the fixed budget the real proportion vanishes as $1/(n+1)$, so their
threshold is eventually crossed; each generation is re-initialized at random, which is the
source of the noise term $MZ_n$; and we derive per-feature halving times rather than a
stability condition, which is what separates fragile from robust features.

Applied to autoregressive transformers LLMs, \citet{pmlr25_model_collapse} study mixtures of
real and synthetic text at a fixed synthetic proportion rather than long recursive chains, and
propose token-level editing of human-produced data as a remedy. Their observation that synthetic
data cover only a narrowed portion of the original distribution is the language-model analogue
of the loss of low-density features we report, obtained without any recursion at all.

We now discuss three works that study diffusion models rather than language models.

\citet{yoon2025chaindiffusion} finetune Stable Diffusion v1.5 with LoRA on its own outputs,
restarting each iteration from the same pretrained weights on a fixed prompt set with no real
data retained: a last generation protocol with a deterministic and strongly informative
initialization. Their central finding is that the classifier-free guidance scale governs both
the speed and the direction of degradation, low guidance producing blurring and high guidance
high-frequency artifacts. This mechanism has no counterpart here, where sampling is
unconditional, which is one reason our degradation is feature-selective rather than
directional. Their quantitative-trait analysis yields a drifting mean with variance stabilizing
at a positive value; Theorem~\ref{thm:last_gen_cv_halving} reaches a comparable non-degenerate
limit through an unrelated mechanism.

\citet{bohacek2025nepotistically} retrain the denoising network of Stable Diffusion on faces of
its own creation over five iterations, at fixed synthetic proportions from $3.3\%$ to $100\%$.
Collapse occurs at every proportion, and curation of the retraining set does not slow it down.
Their real proportion stays constant across iterations rather than being driven to zero as in
our fixed budget protocol, so their result bounds how little synthetic data suffices but leaves
the long-horizon asymptotics open. Their healing experiment, where FID recovers while artifacts
persist, indicates that part of the information survives the chain, consistent with the robust
features of Section~\ref{sec:theory}. Both works finetune a large pretrained model over few
iterations, whereas we retrain small models from scratch over one hundred generations, so the
pretrained weights play in their setting the role that the random initialization $Z_n$ plays in ours.%

\citet{shi2025closer} retrain DDPMs from scratch on CIFAR-10, FFHQ and MNIST under both
replacement and accumulate-subsample, over fewer than ten iterations. They identify a
transition from generalization to memorization driven by the decreasing entropy of the
training set, and propose an entropy-based selection of the training subset as a remedy.
Their study is empirical and measures sample novelty; ours follows the same protocol over
one hundred generations and gives a theoretical account of which features survive.
\\ 

\vspace{1cm}

\noindent {\bf Recall on positioning.}
Our contributions 
are empirical and theoretical. Empirically, 
 most model-collapse works study VAEs, GMMs or language models, while we work with diffusion models (DDPM training, DDIM sampling), which have received far less attention. Theoretically, we introduce a new vector-valued mathematical description of the multi-generational parameter dynamics and prove convergence results explaining both behaviors. In particular we recover the rapid collapse of total replacement but find that a fixed training budget causes only partial, feature-selective degradation. 

\section{Classical stochastic recursion convergence results}
We recall here some classical stochastic analysis results that will be used to prove the main results.
\begin{lemma}
Consider the recursion
\begin{align}
Y_{n+1}
=
\left(I-\frac{1}{n}M\right)Y_n
+\frac{1}{n}\epsilon_n,
\end{align}
where $M$ is symmetric positive definite with eigenvalues
$\lambda_i(M) \in ]0,1[$, and $(\epsilon_n) \sim \mu$ are i.i.d. and $Var[\mu]< \infty$; denote 
$\bar \mu = \mathbb{E}[\mu]$. One has
\begin{align}
Y_n \xrightarrow[n\to\infty]{a.s.} M^{-1}\bar\mu.
\end{align}
In particular, if $\mathbb E[\epsilon_1]=0$, then
\begin{align}
Y_n \xrightarrow[n\to\infty]{a.s.} 0.
\end{align}
\label{lemma:stochastic_reccursion}
\end{lemma}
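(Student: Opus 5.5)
Since $M$ is symmetric, I would first diagonalize it in an orthonormal eigenbasis $(e_i)$. Because the recursion is linear and $M$ acts diagonally there, it decouples into $d$ scalar recursions. For the coordinate $y_n=\langle Y_n,e_i\rangle$ with eigenvalue $\lambda=\lambda_i(M)\in]0,1[$ I get
\begin{align*}
y_{n+1}=\left(1-\frac{\lambda}{n}\right)y_n+\frac{1}{n}\eta_n,\qquad \eta_n=\langle\epsilon_n,e_i\rangle,
\end{align*}
where the $\eta_n$ are i.i.d. with mean $\bar\eta=\langle\bar\mu,e_i\rangle$ and finite variance $\sigma^2$. It therefore suffices to show $y_n\to\bar\eta/\lambda$ a.s., since that is the $i$-th coordinate of $M^{-1}\bar\mu$.

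Next I center the recursion. With $u_n=y_n-\bar\eta/\lambda$ and $\xi_n=\eta_n-\bar\eta$, it becomes $u_{n+1}=(1-\lambda/n)u_n+\xi_n/n$, a Robbins--Monro recursion with steps $1/n$ whose drift has a stable zero at the origin. I would solve it explicitly. Set $P_n=\prod_{k=1}^{n-1}(1-\lambda/k)$, which is nonzero because $\lambda<1$; a Gamma-function asymptotic gives $P_n\sim C n^{-\lambda}$ with $C>0$. Then
\begin{align*}
u_{n}=P_n u_1+P_n\sum_{k=1}^{n-1}\frac{\xi_k}{k\,P_{k+1}} .
\end{align*}
The first term tends to $0$ deterministically because $\lambda>0$.

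The main obstacle is the stochastic term. Here $1/(kP_{k+1})\sim C^{-1}k^{\lambda-1}$, and I would treat it with a martingale argument followed by Kronecker's lemma. Write
\begin{align*}
\sum_{k}\frac{\xi_k}{kP_{k+1}}=\sum_k \frac{1}{P_{k+1}k^{1-\lambda}}\cdot\frac{\xi_k}{k^{\lambda}}\cdot k^{0}.
\end{align*}
More cleanly, I would choose $b_k=1/P_{k+1}\uparrow\infty$ (note $b_k\sim k^{\lambda}/C$) and consider the martingale $N_n=\sum_{k<n}\xi_k/k$. Its terms satisfy $\sum_k \mathbb E[\xi_k^2]/k^2=\sigma^2\sum_k k^{-2}<\infty$, so $N_n$ converges a.s. by the $L^2$ martingale convergence theorem. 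Kronecker's lemma with weights $b_k$ then gives $b_n^{-1}\sum_{k<n} b_k\,\xi_k/k\to0$ a.s., which is exactly $P_n\sum_{k<n}\xi_k/(kP_{k+1})\to0$ up to the harmless index shift between $P_n$ and $P_{n+1}$. Alternatively, I could invoke the classical Robbins--Siegmund theorem directly on $\mathbb E[u_{n+1}^2\mid\mathcal F_n]\le(1-2\lambda/n+\lambda^2/n^2)u_n^2+\sigma^2/n^2$; this gives a.s. convergence of $u_n^2$, and $\sum 2\lambda u_n^2/n<\infty$ forces the limit to be $0$.

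Combining the $d$ coordinates gives $Y_n\to M^{-1}\bar\mu$ a.s., and the case $\bar\mu=0$ follows immediately.
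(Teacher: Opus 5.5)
Your proof is correct, but it takes a different route from the paper. The paper never diagonalizes or solves the recursion. It writes it as the Robbins--Monro scheme $Y_{n+1}=Y_n+\frac1n\bigl[-(MY_n-\bar\mu)+(\epsilon_n-\bar\mu)\bigr]$ and invokes the ODE method of stochastic approximation: Borkar's Theorem 2.1, with almost-sure boundedness of the iterates supplied by his Theorem 4.1. It then only checks three things: $\sum 1/n=\infty$ and $\sum 1/n^2<\infty$; $\epsilon_n-\bar\mu$ is a square-integrable martingale difference; and $M^{-1}\bar\mu$ is the unique globally asymptotically stable equilibrium of $\dot y=-My+\bar\mu$.

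You instead use linearity and symmetry to reduce to scalar recursions. You solve them explicitly with $P_n=\prod_{k<n}(1-\lambda/k)=\Gamma(n-\lambda)/(\Gamma(1-\lambda)\Gamma(n))$, and finish with $L^2$ martingale convergence of $\sum_k\xi_k/k$ plus Kronecker's lemma with $b_k=1/P_{k+1}\uparrow\infty$. In fact $b_{n-1}=1/P_n$ exactly, so there is no index shift to worry about. Your Robbins--Siegmund alternative is closer in spirit to the paper, since it is the Lyapunov-function argument that underlies such stochastic-approximation theorems.

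Your explicit route buys several things:
\begin{itemize}
\item It is elementary and self-contained.
\item The boundedness step the paper outsources to Theorem 4.1 becomes automatic.
\item The Kronecker version does not need the initial condition to be independent of the noise, because $P_nu_1\to0$ pathwise.
\item It exhibits the $n^{-\lambda}$ decay through the same Gamma product the paper later uses for the fixed-budget halving time, so one computation serves both results.
\end{itemize}
The paper's route is a short citation, and it extends unchanged to nonlinear drifts or to non-symmetric $M$ whose eigenvalues have positive real part, where an orthonormal diagonalization is unavailable.

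One small point: the display containing the factor $k^{0}$ is correct but does no work, and can be deleted.
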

\begin{proof}

We use \cite[Chapters 2 and 4]{Borkar2022} more precisely  convergence theorem
Theorem 2.1 page 17 (we will check conditions (A1)-(A3) page 11) and boundedness Theorem 4.1 page 54, (we check condition (A9) page 50).
Here $h(x)=Mx$. 
Apply the convergence theorem  to
\begin{align}
Y_{n+1}
=
Y_n+\frac{1}{n}
\bigl[-(MY_n-\bar\mu)+(\epsilon_n-\bar\mu)\bigr].
\end{align}

The associated ODE is $\dot{y}=-My+\bar\mu$, whose unique globally asymptotically stable equilibrium is
$y^\ast=M^{-1}\bar\mu$.
 Since $\sum_{n=1}^{\infty} n^{-1}=\infty$, 
$\sum_{n=1}^{\infty} n^{-2}<\infty$, 
and $\epsilon_n-\bar\mu$ has finite second moment, the theorem yields
\begin{align}
Y_n\longrightarrow M^{-1}\bar\mu
\qquad\text{almost surely}.
\end{align}    
Alternative proofs can follow the arguments in  \cite{Duflo1997,KushnerYin2003}.
\end{proof}
\begin{lemma}[Stable linear stochastic recursion]
Let $(X_n)_{n\ge 0}$ be defined by
\begin{align}
X_{n+1}=AX_n+U_n,
\label{eq:reccursion_axplusu}
\end{align}
where $X_n,U_n\in\mathbb R^d$, $(U_n)_{n\ge0}$ are i.i.d. random vectors with
\begin{align}
\mathbb E\|U_0\|^2<\infty,
\end{align}
and $A\in\mathbb R^{d\times d}$ a symetric positive definite matrix with spectrum in $]0,1[$. 
Then there exists a unique stationary distribution $\pi$, and for every initial condition $X_0$,
\begin{align}
X_n \xrightarrow[n\to\infty]{d} X_\infty \sim \pi
\ \text{ with }
X_\infty
\stackrel{d}{:=} 
\sum_{k=0}^{\infty}A^kU_k,
\end{align}
where the series converges in $L^2$, and $\pi$ is the unique 
distribution such that:
\begin{align}
X_\infty \stackrel d= AX_\infty + U,
\end{align}
with $U$ distributed as $U_0$ and independent of $X_\infty$.
In addition: 
\begin{align}
\mathbb E[X_\infty]
=
(I-A)^{-1} \mathbb E[U_0].
\end{align}

If $\Sigma_U=\operatorname{Cov}(U_0)$, then the covariance matrix
$\Sigma_\infty=\operatorname{Cov}(X_\infty)$ is the unique solution of the
discrete Lyapunov equation
\begin{align}
\Sigma_\infty
=
A\Sigma_\infty A^\top+\Sigma_U.
\end{align}
\label{lemma:var1_convergence}
\end{lemma}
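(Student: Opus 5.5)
The plan is to unroll the recursion \eqref{eq:reccursion_axplusu} and to take as the candidate stationary law the law of the series $X_\infty:=\sum_{k\ge0}A^kU_k$. Since $A$ is symmetric with spectrum in $]0,1[$, we have $\|A^k\|=\rho^k$ with $\rho=\max\operatorname{spec}(A)<1$.

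\textbf{Step 1 (convergence of the series).} Write $U_k=\bar u+\tilde U_k$ with $\bar u=\mathbb E[U_0]$. The deterministic part $\sum_k A^k\bar u$ converges absolutely to $(I-A)^{-1}\bar u$. The centered part is a sum of independent centered terms, so by orthogonality
$$\mathbb E\Big\|\sum_{k=m}^{p}A^k\tilde U_k\Big\|^2=\sum_{k=m}^p\mathbb E\|A^k\tilde U_k\|^2\le \mathbb E\|\tilde U_0\|^2\sum_{k=m}^p\rho^{2k}.$$
The partial sums are therefore Cauchy in $L^2$, which gives the $L^2$ limit. The same computation yields $\mathbb E[X_\infty]=(I-A)^{-1}\mathbb E[U_0]$. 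It also yields $\Sigma_\infty=\sum_k A^k\Sigma_U (A^k)^\top$, and splitting off the $k=0$ term shows that this satisfies the Lyapunov equation $\Sigma_\infty=A\Sigma_\infty A^\top+\Sigma_U$. Uniqueness of the Lyapunov solution follows because the difference $\Delta$ of two solutions satisfies $\Delta=A\Delta A^\top=A^n\Delta (A^n)^\top\to0$.

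\textbf{Step 2 (convergence in distribution).} Iterating gives $X_n=A^nX_0+\sum_{k=0}^{n-1}A^{n-1-k}U_k$. By the i.i.d.\ assumption, the reversed vector $(U_{n-1},\dots,U_0)$ has the same law as $(U_0,\dots,U_{n-1})$, so the sum has the same law as $\sum_{k=0}^{n-1}A^kU_k$, which converges in $L^2$, and hence in law, to $X_\infty$. Meanwhile $\|A^nX_0\|\le\rho^n\|X_0\|\to0$ almost surely for any initial condition. Slutsky's lemma then gives $X_n\xrightarrow{d}X_\infty$. The one point needing care is that the reversal identity $X_n-A^nX_0\stackrel d=\sum_{k<n}A^kU_k$ requires $X_0$ to be independent of $(U_k)$ only for the joint statement; since $A^nX_0\to0$ almost surely regardless, no independence is needed.

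\textbf{Step 3 (stationarity and uniqueness).} The series satisfies $X_\infty=U_0+A\sum_{k\ge0}A^kU_{k+1}$. The shifted sum has the same law as $X_\infty$ and is independent of $U_0$, so the law $\pi$ of $X_\infty$ solves the fixed-point equation $X_\infty\stackrel d=AX_\infty+U$. For uniqueness, take any $\pi'$ solving it and start the chain from $X_0\sim\pi'$ independent of $(U_k)$. Then $X_n\sim\pi'$ for all $n$, while Step 2 gives $X_n\xrightarrow{d}\pi$, so $\pi'=\pi$. The same argument shows $\pi$ is the unique stationary distribution.

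I expect the main obstacle to be Step 2, specifically the justification of the time-reversal identity in law together with the treatment of the initial condition; the rest reduces to geometric bounds on $\|A^k\|$.
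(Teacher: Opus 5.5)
Your proof is correct and complete. It also does more than the paper's own proof, which gives no argument: the paper cites the stationary-solution theorem for stable VAR(1) processes (L\"utkepohl, Sections 2.1.1--2.1.4) and remarks only that $\sum_k A^kU_k$ has meaning in law and that the order of the $U_k$ is immaterial.

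The textbook route behind that citation builds a two-sided process from the infinite past, $y_t=\sum_{i\ge0}A^iu_{t-i}$, as a mean-square limit. It then reads off the first and second moments, which is where the Lyapunov equation comes from. It is phrased in terms of second-order stationarity. It does not directly give convergence in law of the forward chain from an arbitrary $X_0$, nor uniqueness of the invariant law.

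Your argument supplies exactly these pieces:
\begin{itemize}
\item The exchangeability identity $(U_{n-1},\dots,U_0)\stackrel d=(U_0,\dots,U_{n-1})$ is the precise content of the paper's remark.
\item Slutsky's lemma covers every initial condition.
\item The coupling in Step 3 gives uniqueness of $\pi$ among all probability laws on $\mathbb R^d$, with no moment assumption on the competitor $\pi'$.
\end{itemize}

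One small clarification on Step 2. The reversal identity involves only the $U_k$, never $X_0$. No independence of $X_0$ is needed simply because Slutsky's lemma imposes none between a term converging in law and a term tending to $0$ in probability.

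What the citation buys in exchange is brevity and generality: VAR($p$) processes, and non-symmetric $A$ with spectral radius below $1$. Your proof extends to non-symmetric $A$ by replacing $\|A^k\|=\rho^k$ with $\|A^k\|\le Cr^k$ for some $r<1$, and to VAR($p$) via the companion form.
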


\begin{proof}
This is the standard stationary solution theorem for a stable VAR(1)
process; see, e.g., Lütkepohl \cite[Sections 2.1.1--2.1.4]{Lutkepohl2005}. The contraction property of $A$ is one of the main ingredient for the existence and uniqueness of the distribution $\pi$.
Note that only the convergence in distribution hold, not pointwise; moreover the formula 
$X_\infty
\stackrel{d}{:=} 
\sum_{k=0}^{\infty}A^kU_k$ has only a meaning in distribution sense and not pointwise, in particular the order of $U_k$ has no importance (can be permuted) as they do not correspond necessary to the increments in recursion \eqref{eq:reccursion_axplusu}.
\end{proof}

\section{Proof of the main results}
\label{appendix:proof_main_results}
\subsection{Proof of Theorem \ref{thm:last_gen_cv_halving}}
\begin{proof}
We prove convergence for $T_n$, the convergence of $S_n$ being immediate from 
\eqref{eq:last_generation}. 
Under the last generation protocol
after replacing \eqref{eq:last_generation}
in  \eqref{eq:training_equation} we obtain:
\begin{align}
T_{n+1}= A T_n + U_n, \ \text{ with }
A:= I-M, \ U_n := MZ_n + \varepsilon_n.
\end{align}
We can use now Lemma~\ref{lemma:var1_convergence} to obtain the convergence result. For the second part, let us now inquire about the halving time 
$h_{1/2}^{last \ gen}$
of the average $m_n = \mathbb{E}[\langle T_n, e_\lambda \rangle]$. 
Then, since $U_n$ is centered, the recursion for the average $m_n$ of this feature can be written 
$m_{n+1} = (1-\lambda) m_n$ . This means $m_{n} = (1-\lambda)^n m_0$. The time required to reach half of the value $m_0$ will be the smallest $n$ such that $(1-\lambda)^n \le 1/2$ which gives equation \eqref{eq:halving_last_gen}.
\end{proof}
\subsection{Proof of Theorem \ref{thm:fixed_budget_cv_halving}}
\begin{proof}
We consider the fixed budget protocol
 \eqref{eq:initialization_hyp},
 \eqref{eq:training_equation}, \eqref{eq:fixed_budget}.
After replacing \eqref{eq:fixed_budget}
in  \eqref{eq:training_equation} we obtain:
\begin{align}
T_{n+1}=  T_n - \frac{M}{n+1}T_n  + U_n, \ \text{ with }
U_n := \frac{MZ_n + \varepsilon_n}{n+1}.
\end{align}
We can use now Lemma~\ref{lemma:stochastic_reccursion} to obtain that the target parameter $T_n$ converges to zero pointwise. 

For the second part, we study the halving time 
$h_{1/2}^{fixed \ budget}$
of the average $m_n = \mathbb{E}[\langle T_n, e_\lambda \rangle]$. 
Since $U_n$ is centered, the recursion for  $m_n$ can be written
$m_{n+1}=\left(1-\frac{\lambda}{n+1} \right) m_n$
which yields
$m_n/m_0= \prod_{k=1}^{n}  \left(1-\frac{\lambda}{k} \right) = \frac{\Gamma(n+1-\lambda)}{\Gamma(n+1) \Gamma(1-\lambda)} \sim \frac{n^{-\lambda}}{\Gamma(1-\lambda)}$ which show that halving time if $e^{-\gamma} 2^{1/\lambda} 
(1+ O(\lambda))$ where $\Gamma$ is the Euler function.
\end{proof}

\section{Metrics results}
\label{app:fid-figures}

\begin{figure}[H]
  \centering
\includegraphics[height=.2\textheight, trim={0 0 240 0}, clip]{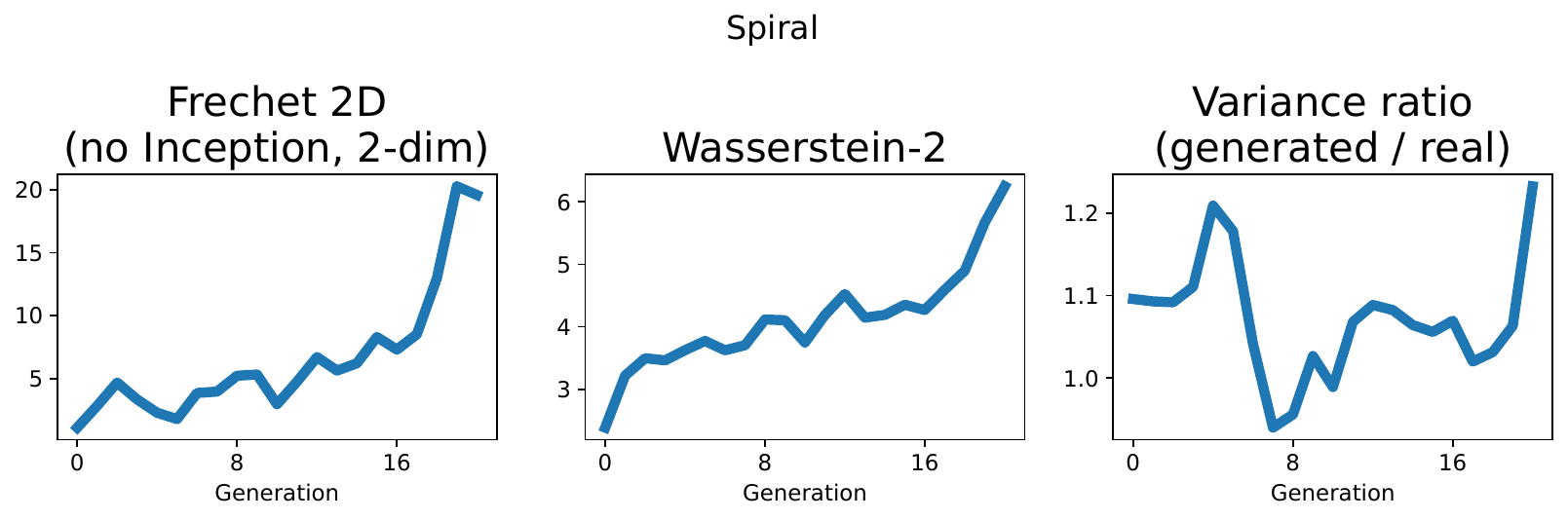}

\includegraphics[height=.2\textheight]{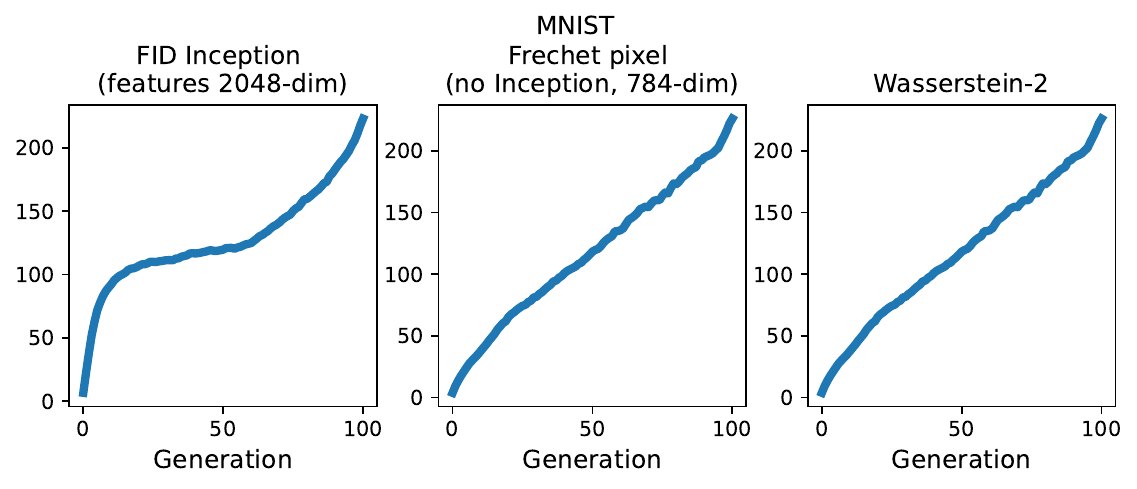}

\includegraphics[height=.2\textheight]{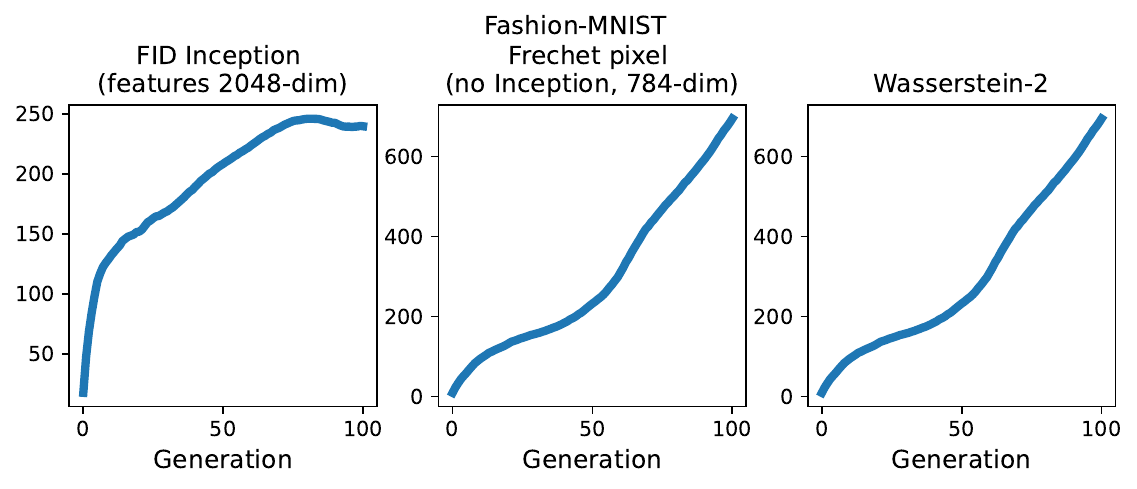}

\includegraphics[height=.25\textheight]{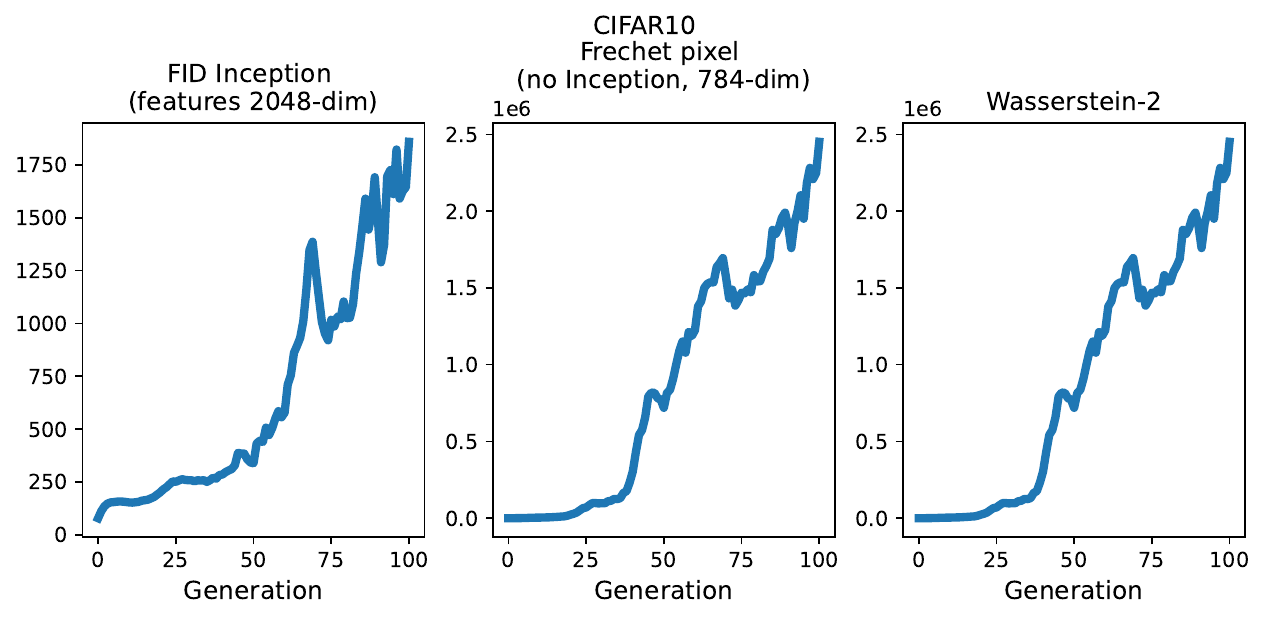}
  \caption{Last generation training protocol cf. Section \ref{sec:methodology}. We plot the evolution of the FID, pixel-FID and Wasserstein-2 metrics. Distance rapidly increases in all cases.
  {\bf From top to bottom:} 2D spiral, MNIST, Fashion-MNIST, CIFAR-10.
  }
  \label{fig:metrics_last_gen}
\end{figure}

\begin{figure}[H]
  \centering
\includegraphics[height=.2\textheight, trim={0 0 240 0}, clip]{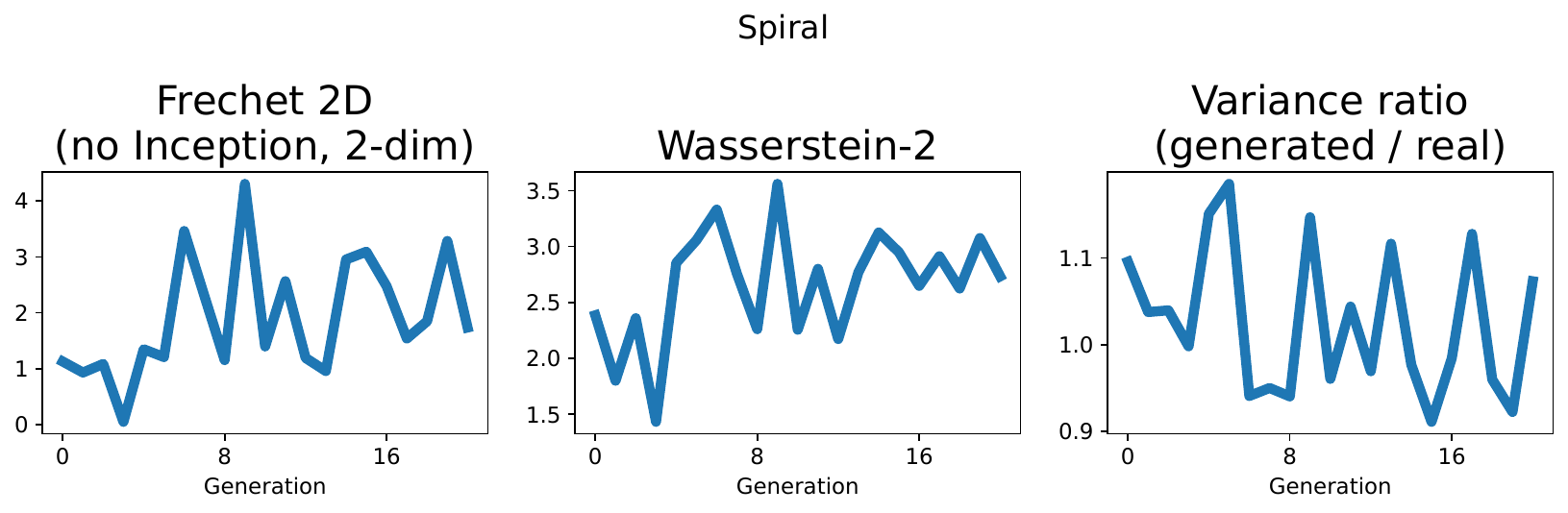}

\includegraphics[height=.2\textheight]{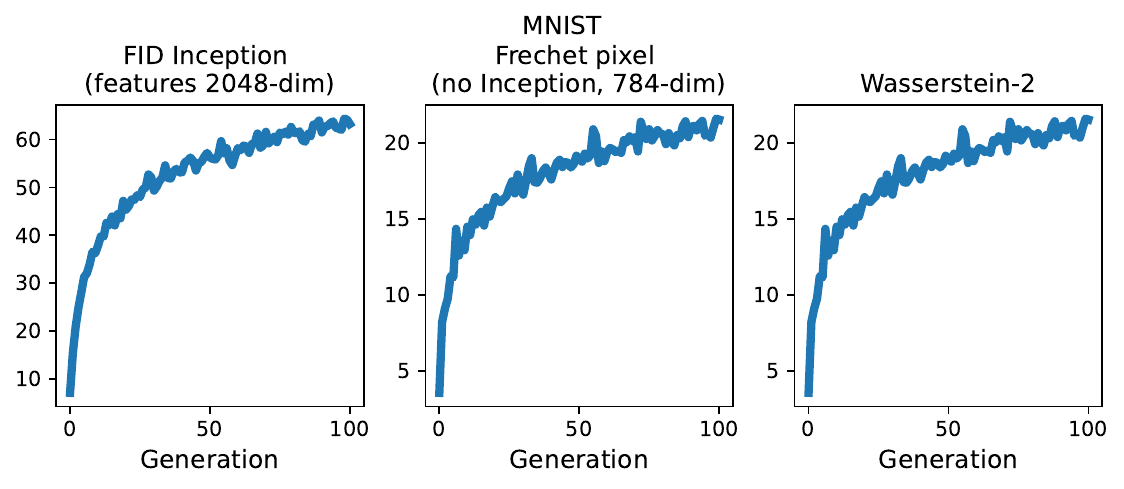}

\includegraphics[height=.2\textheight]{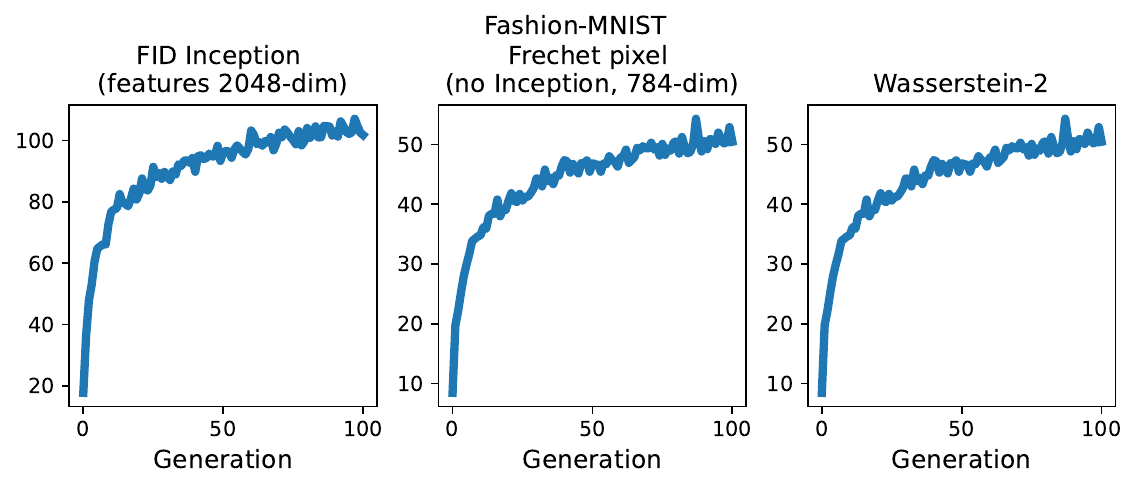}

\includegraphics[height=.25\textheight]{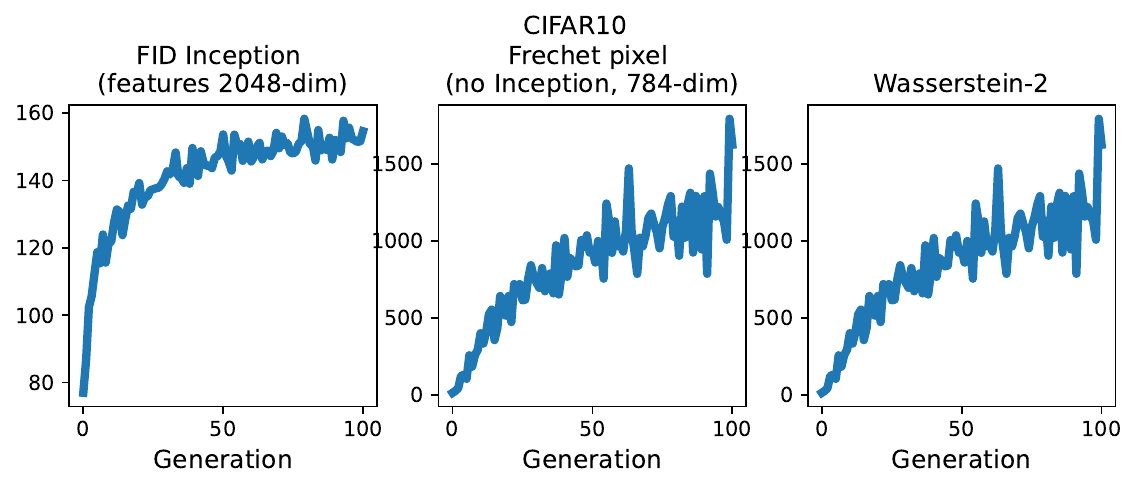}
  \caption{Fixed-budget generation training protocol cf. Section \ref{sec:methodology}. We plot the evolution of the FID, pixel-FID and Wasserstein-2 metrics. Distance increases in all cases but the increase is more tempered than for the last generation protocol in Figure~\ref{fig:metrics_last_gen}. 
  {\bf From top to bottom:} 2D spiral, MNIST, Fashion-MNIST, CIFAR-10.
  }
  \label{fig:metrics_fixed_budget}
\end{figure}

\section{Detailed limitations}
\label{app:limitations}

\begin{itemize}
\item \emph{Modelling assumptions.} The theory is a first-order, NTK-style linear
response. It assumes an exact parameter representative for every dataset
\eqref{hyp:dataset_representation}, linear interpolation in parameter space for
data mixtures, a symmetric positive-definite contraction $M$ with a fixed
spectrum, and centered i.i.d.\ noise and initialization independent across
generations. Real recursive training meets none of these exactly; the remark
after Theorem~\ref{thm:fixed_budget_cv_halving} only absorbs additive
representation error. So it should be rather be read as the simplest linear model that can be relevant to this setting, not the most precise one.
\item \emph{Feature identification.} The features are eigenvectors of $M$. Their
identification with semantic attributes of the data, or with distribution tails,
is an interpretation supported by the experiments and by analogy with
\citet{seddik2024badtrainingsyntheticdata}, not something the model establishes.
\item \emph{Sampler and setting.} We use unconditional DDIM sampling, which has
no maximum-likelihood property, so the positive result of
\citet{barzilai2026modelsdontcollapseconsistency} does not apply and ours does
not transfer to guided or conditional generation, where
\citet{yoon2025chaindiffusion} show the guidance scale drives degradation.
\item \emph{Scale.} We retrain small models from scratch, with $n_d$
samples on several datasets. The behavior
of the fragile/robust partition for large pretrained models finetuned over few
iterations \citep{bohacek2025nepotistically,yoon2025chaindiffusion} is not
tested here.
\item \emph{Pipeline realism.} We do not sweep the budget size or the
pool-mixing ratio, and we do no model curation or filtering of the pool, which
\citet{bohacek2025nepotistically} report can change the picture.
\item \emph{Metrics.} FID, pixel-FID and $\mathcal{W}_2$ measure distributional
distance, not downstream utility or any explicit fairness criterion; connecting
halving times to the latter is left open.
\end{itemize}

\section{Common architecture and code flow}

\subsection{Recursive training protocol.}

Generation 0 is trained from scratch on $n_d=10\,000$ real images (see Section~\ref{sec:appendix_spiral} for 2D-spiral dataset).
After training, the EMA model generates another $n_d$ synthetic samples.

For generation $g>0$, a pool is constructed from

\begin{align}
\mathcal{P}ool_n
=
\mathcal{D}
\cup
\mathcal{D}_1
\cup
\dots
\cup
\mathcal{D}_{n-1},
\end{align}

where $\mathcal{D}_n$ denotes the synthetic dataset produced at generation
$n$.

For the fixed budget protocol
a random subset of $n_d$ images is sampled from this pool
and used to
train a newly initialized diffusion model. 
For the last generation protocol $\mathcal{D}_{n-1}$ is used to train generation $n$.
The trained model then generates
a new synthetic dataset $\mathcal{D}_n$, which is added to the pool for the
next generation.

This process is repeated for 100 generations (see Section~\ref{sec:appendix_spiral} for 2D-spiral dataset). Sample quality is monitored
through the Fréchet Inception Distance (FID) computed against the original
real dataset.

\subsection{DDPM training, DDIM sampling}
The sampling principles are common to all datasets. 
Generation uses deterministic DDIM sampling. Starting from

\begin{align}
x_T \sim \mathcal{N}(0,I),
\end{align}

the model predicts

\begin{align}
\hat{x}_0
=
\frac{x_t-\sqrt{1-\bar{\alpha}_t}\,
\epsilon_\theta(x_t,t)}
{\sqrt{\bar{\alpha}_t}},
\end{align}

and performs the deterministic DDIM update

\begin{align}
x_{t'}
=
\sqrt{\bar{\alpha}_{t'}}\,\hat{x}_0
+
\sqrt{1-\bar{\alpha}_{t'}}
\,\epsilon_\theta(x_t,t).
\end{align}

Although the diffusion process is trained with $T=1000$ steps, inference
uses  $100$ or $200$ DDIM steps (depending on dataset).

\section{Specific architecture and code flow: 2D spiral} \label{sec:appendix_spiral}

\subsection{Dataset}

The real dataset is a two-dimensional Archimedean spiral. For $n_d$ points we take
$\theta$ uniformly spaced on $[0, 2\pi \tau]$ and set

\begin{align}
r = r_0 + s\,\theta,
\qquad
x = r\cos\theta,
\qquad
y = r\sin\theta,
\end{align}

with $r_0 = 0$, spacing $s = 1.5$ and $\tau = 2$ turns. The sampling is deterministic
given the seed, and no normalisation is applied, so the data are supported on a
one-dimensional curve embedded in $\mathbb{R}^2$ with a total variance of order $10^2$.

\subsection{Network Architecture}

The denoising network operates directly on the two-dimensional coordinates and predicts
a two-dimensional output. Since the data carry no spatial structure, the convolutional
U-Net used for the image datasets is replaced by a fully connected network.

\subsection{Time Conditioning}

Diffusion timesteps are encoded using a sinusoidal positional embedding of dimension
$32$, followed by a two-layer MLP:

\begin{align}
e_t
=
\mathrm{MLP}
\left(
\mathrm{SinusoidalEmbedding}(t)
\right).
\end{align}

The projected time representation has dimension

\begin{align}
d_t = 64.
\end{align}

Unlike the image architectures, time information is not injected through FiLM-style
modulation. The time embedding is concatenated to the input coordinates,

\begin{align}
h_0 = [\,x_t \, ; \, e_t\,] \in \mathbb{R}^{2 + d_t},
\end{align}

and the resulting vector is passed to the backbone.

\subsection{MLP Backbone}

The backbone is a four-layer perceptron with ReLU activations and constant hidden width
$d_h = 256$:

\begin{itemize}
\item Linear: $(2 + d_t) \rightarrow d_h$, ReLU,
\item Linear: $d_h \rightarrow d_h$, ReLU,
\item Linear: $d_h \rightarrow d_h$, ReLU,
\item Linear: $d_h \rightarrow 2$.
\end{itemize}

No residual connection, no normalisation layer and no attention mechanism are used.

\subsection{Summary of Main Architectural Hyperparameters}

\begin{itemize}
\item Input dimension: $2$.
\item Time embedding dimension: $32$.
\item Time MLP dimension: $64$.
\item Hidden width: $256$.
\item Number of hidden layers: $3$.
\item Activation function: ReLU.
\item Output dimension: $2$.
\end{itemize}

\subsection{Parameter Count}

The model contains

\begin{align}
1.56\times 10^5
\end{align}

trainable parameters (exactly $155\,522$), of which $6\,272$ belong to the time-embedding
MLP, a size comparable to that of the MNIST and Fashion-MNIST denoiser.

\subsection{Main hyperparameters}

For reproducibility, the most important parameters in the code are:

\begin{itemize}
\item Dataset: 2D Archimedean spiral, $2$ turns.
\item Training set size: $1000$ points.
\item Number of generations : $20$
\item Batch size: $256$.
\item Diffusion steps: $T=1000$, linear schedule
      $\beta_t \in [10^{-4}, 2\times 10^{-2}]$.
\item DDIM inference steps: $100$.
\item Optimiser: Adam.
\item Learning rate: $10^{-3}$ 
\item EMA decay: $0.8$.
\item Training epochs: $2000$.
\end{itemize}

\subsection{Metrics}

In dimension two neither the Inception features nor the pixel-space Fréchet distance are
meaningful, so the metrics of Section~\ref{sec:methodology} are instantiated as follows:
$\mathcal{W}_2$ is computed exactly by solving the discrete optimal transport problem
between the real and generated point clouds, and the Fréchet distance is computed between
the Gaussian approximations of the two clouds in $\mathbb{R}^2$.

\section{Specific architecture and code flow: MNIST and Fashion-MNIST}

\subsection{Network Architecture}

The denoising network is a compact U-Net operating on MNIST/ Fashion-MNIST images
of size $28\times 28\times 1$. The model predicts a single-channel output
having the same spatial resolution as the input.

\subsection{Time Conditioning}

Diffusion timesteps are encoded using a sinusoidal positional embedding of
dimension $32$, followed by a two-layer MLP:

\begin{align}
e_t
=
\mathrm{MLP}
\left(
\mathrm{SinusoidalEmbedding}(t)
\right).
\end{align}

The projected time representation has dimension

\begin{align}
d_t = 64.
\end{align}

Time information is injected into every residual block through FiLM-style
conditioning. The time embedding is mapped to channel-wise scale and shift
vectors,

\begin{align}
(\gamma_t,\beta_t)
=
W e_t,
\end{align}

which modulate the intermediate feature maps according to

\begin{align}
h
\leftarrow
h \odot (1+\gamma_t)
+
\beta_t.
\end{align}

\subsection{Residual Blocks}

Each residual block contains

\begin{enumerate}
\item a $3\times3$ convolution,
\item ReLU activation,
\item FiLM conditioning from the timestep embedding,
\item a second $3\times3$ convolution,
\item ReLU activation,
\item a residual skip connection.
\end{enumerate}

Whenever the channel dimensions differ between input and output, the shortcut
path uses a $1\times1$ convolution for channel projection.

\subsection{U-Net Backbone}

The architecture follows a two-level encoder-decoder design with skip
connections.

\paragraph{Encoder.}

Starting from a $28\times28\times1$ image, the encoder consists of

\begin{itemize}
\item Residual block:
      $1 \rightarrow 32$ channels,
\item $2\times2$ max pooling,
\item Residual block:
      $32 \rightarrow 64$ channels,
\item $2\times2$ max pooling.
\end{itemize}

The bottleneck representation therefore has spatial resolution
$7\times7$ and $64$ channels.

\paragraph{Bottleneck.}

The latent representation is processed by a single residual block

\begin{align}
64 \rightarrow 64.
\end{align}

No self-attention mechanism is used.

\paragraph{Decoder.}

The decoder mirrors the encoder through nearest-neighbor upsampling and
skip connections:

\begin{itemize}
\item Upsample:
      $7\times7 \rightarrow 14\times14$,
\item Concatenate encoder features,
\item Residual block:
      $(64+64)\rightarrow64$,
\item Upsample:
      $14\times14 \rightarrow 28\times28$,
\item Concatenate encoder features,
\item Residual block:
      $(64+32)\rightarrow32$.
\end{itemize}

The final prediction layer is a $1\times1$ convolution mapping

\begin{align}
32 \rightarrow 1
\end{align}

channel, producing the image-sized output.

\subsection{Summary of Main Architectural Hyperparameters}

\begin{itemize}
\item Input resolution: $28\times28$.
\item Number of input channels: $1$.
\item Time embedding dimension: $32$.
\item Time MLP dimension: $64$.
\item Encoder channels: $32 \rightarrow 64$.
\item Bottleneck channels: $64$.
\item Decoder channels: $64 \rightarrow 32$.
\item Activation function: ReLU.
\item Downsampling: MaxPool$(2)$.
\item Upsampling: nearest-neighbor interpolation.
\item Skip connections: U-Net concatenation skips at both resolutions.
\item Output channels: $1$.
\end{itemize}

\subsection{Parameter Count}

The model contains approximately

\begin{align}
1.87\times 10^5
\end{align}

trainable parameters (about $187$k parameters), making it substantially
smaller than the CIFAR-10 attention U-Net used in the companion experiment.

\subsection{Main hyperparameters}
\begin{itemize}
\item Training set size: $10,000$ points.
\item Number of generations : $100$
\item Batch size: $256$.
\item Diffusion steps: $T=1000$, linear schedule
      $\beta_t \in [10^{-4}, 2\times 10^{-2}]$.
\item DDIM inference steps: $100$.
\item Optimiser: Adam.
\item Learning rate: $10^{-4}$ .
\item EMA decay: $0.8$.
\item Training epochs: $1000$.
\end{itemize}

\section{Specific architecture and code flow: CIFAR-10}

\subsection{Diffusion Backbone}

We employ a denoising diffusion model operating on CIFAR-10 images of size
$32\times 32\times 3$.
The network predicts the additive Gaussian noise
$\epsilon_\theta(x_t,t)$ from a noisy image $x_t$ and diffusion timestep $t$.
Training uses the standard noise-prediction objective

\begin{align}
\mathcal{L}
=
\mathbb{E}_{x_0,\epsilon,t}
\left[
\left\|
\epsilon
-
\epsilon_\theta(x_t,t)
\right\|_2^2
\right],
\end{align}

where

\begin{align}
x_t
=
\sqrt{\bar{\alpha}_t}\,x_0
+
\sqrt{1-\bar{\alpha}_t}\,\epsilon,
\quad
\epsilon \sim \mathcal{N}(0,I).
\end{align}

A linear diffusion schedule is used with $T=1000$ timesteps and

\begin{align}
\beta_t \in [10^{-4}, 2\times 10^{-2}].
\end{align}

\subsection{Time Conditioning}

Diffusion timesteps are encoded using sinusoidal embeddings followed by a
two-layer MLP:

\begin{align}
e_t = \mathrm{MLP}(\mathrm{SinusoidalEmbedding}(t)).
\end{align}

The model uses:

\begin{itemize}
\item sinusoidal embedding dimension: $64$,
\item projected time dimension: $32$.
\end{itemize}

Time information is injected into every residual block through FiLM-style
conditioning. For a hidden activation $h$, the time embedding produces
channel-wise scale and shift vectors

\begin{align}
(\gamma_t,\beta_t)
=
W e_t,
\end{align}

and conditioning is applied as

\begin{align}
h \leftarrow h \odot (1+\gamma_t)+\beta_t.
\end{align}

\subsection{Residual Block}

Each residual block consists of

\begin{enumerate}
\item $3\times3$ convolution,
\item GroupNorm,
\item SiLU activation,
\item FiLM conditioning from the timestep embedding,
\item second $3\times3$ convolution,
\item GroupNorm,
\item SiLU activation,
\item residual skip connection.
\end{enumerate}

If the input and output channel dimensions differ, a $1\times1$ convolution
is used in the residual shortcut.

\subsection{Attention U-Net Architecture}

The denoiser follows a compact U-Net architecture with encoder, bottleneck,
self-attention layer, and symmetric decoder.

For the experiments reported in the code, the base width is

\begin{align}
D = 8.
\end{align}

The channel progression is therefore

\begin{align}
8 \rightarrow 16 \rightarrow 32 .
\end{align}

\paragraph{Encoder.}

\begin{itemize}
\item Input: $32\times32\times3$
\item Two residual blocks:
      $3\rightarrow D \rightarrow D$
\item Downsampling by $2\times2$ max pooling
\item Two residual blocks:
      $D\rightarrow 2D \rightarrow 2D$
\item Downsampling
\item Two residual blocks:
      $2D\rightarrow 4D \rightarrow 4D$
\item Downsampling
\end{itemize}

This produces a bottleneck representation of spatial size
$4\times4$ with $4D$ channels.

\paragraph{Bottleneck.}

The bottleneck contains two residual blocks

\begin{align}
4D \rightarrow 4D \rightarrow 4D.
\end{align}

A multi-head self-attention layer is then applied after flattening the
$4\times4$ spatial grid into $16$ tokens.

\begin{itemize}
\item embedding dimension: $4D=32$,
\item number of heads: $4$.
\end{itemize}

The attention output is added back through a residual connection.

\paragraph{Decoder.}

The decoder mirrors the encoder:

\begin{itemize}
\item Upsample $4\times4 \rightarrow 8\times8$
\item Concatenate encoder skip features
\item Residual blocks:
      $(4D+4D)\rightarrow4D\rightarrow4D$
\item Upsample $8\times8 \rightarrow 16\times16$
\item Concatenate skip features
\item Residual blocks:
      $(4D+2D)\rightarrow2D\rightarrow2D$
\item Upsample $16\times16 \rightarrow 32\times32$
\item Concatenate skip features
\item Residual blocks:
      $(2D+D)\rightarrow D\rightarrow D$
\end{itemize}

Finally, a $1\times1$ convolution maps the hidden representation back to
three channels to predict noise:

\begin{align}
D \rightarrow 3.
\end{align}

\subsection{Main hyperparameters}

\begin{itemize}
\item Training subset size: $10,000$ images.
\item Batch size: $256$.
\item Diffusion steps: $T=1000$.
\item DDIM inference steps: $200$.
\item Learning rate: $5\times 10^{-4}$.
\item EMA decay: $0.8$.
\item Base channel width: $D=8$.
\item Time embedding dimension: $64$.
\item Projected time dimension: $32$.
\item Attention heads: $4$.
\item Training epochs: $1000$.
\end{itemize}

\end{document}